\title{\bf Taming Equilibrium Bias in Risk-Sensitive Multi-Agent Reinforcement Learning}
\author{Yingjie Fei%
\thanks{\texttt{yf275@cornell.edu}}\qquad
Ruitu Xu%
\thanks{Department of Statistics and Data Science, Yale University; \texttt{chris.xu@yale.edu}}}
\date{}
\begin{document}

\maketitle

\begin{abstract}
We study risk-sensitive multi-agent reinforcement learning under general-sum Markov games, where agents optimize the entropic risk measure of rewards with possibly diverse risk preferences. 
We show that using the regret naively adapted from existing literature as a performance metric could induce policies with equilibrium bias that favor the most risk-sensitive agents and overlook the other agents. To address such deficiency of the naive regret, we propose a novel notion of regret, which we call risk-balanced regret, and show through a lower bound that it overcomes the issue of equilibrium bias.
Furthermore, we develop a  self-play algorithm for learning Nash, correlated, and coarse correlated equilibria in risk-sensitive Markov games. We prove that the proposed algorithm attains   near-optimal regret guarantees with respect to the risk-balanced regret.
\end{abstract}


\section{Introduction}

Recent advancement in reinforcement learning research has witnessed much development on \MARL. However, most of the works focus on risk-neutral agents, which may not be suitable for modeling the real world.
For example, in investment activities, different investors have different risk preferences depending on their roles in the market. 
Some act as  speculators and are  risk-seeking, while others are bound by regulatory constraints and are thus risk-averse. 
Another example is  multi-player online role-playing games, where each of the players can be considered an agent. Whereas some (risk-seeking) players enjoy exploring uncharted regions in the game, others (risk-averse players) prefer to playing in areas that are well explored and come with less uncertainty. 
It is not hard to see that in the above examples, modeling each agent as uniformly risk-neutral is inappropriate. This naturally calls for a more sophisticated modeling framework that takes into account of heterogeneous risk preferences of agents.

In this paper, we study 
the problem of risk-sensitive \MARL under the setting of general-sum \MGs, a more realistic multi-agent model in which the agents may take  different risk preferences.
To that end, we consider $M\ge 1$ agents who maximize the entropic risk measure of rewards, informally defined as
\begin{align*}
    V_m \defeq   \frac{1}{\beta_{m}}\log
    \E
    [e^{\beta_m R_m}],
    \quad \forall m \in [M],
\end{align*}
where $\beta_m$ represents the risk parameter of agent $m$ and $R_m$ is the corresponding reward. For each agent $m$, $\beta_m > 0$ means that the agent prefers more risk (or risk seeking) and  $\beta_m < 0$ indicates that the agent favors less risk (or risk averse); when $\beta_m \to 0$, the agent tends to be  risk-neutral and maximizes the expected reward $\E[R_m]$.
We aim to learn equilibria of the underlying \MG in an online fashion, without access to  knowledge or simulator of the unknown transition kernels. 

In the online setting, a standard metric for quantifying the performance of an algorithm is \textit{regret}, which computes the cumulative difference between the best possible utility and the utility attained by the algorithm.
Unfortunately, the formulation of regret naively adapted from the risk-neutral setting fails to represent a suitable performance metric under the risk-sensitive setting: we show that the naive regret is dominated by the sub-optimality of the most risk-sensitive agents, demonstrated through a lower bound.
Such deficiency may lead to algorithms that appear nearly optimal but generate policies that suffer \textit{equilibrium bias}, by which the most risk-sensitive agents are falsely favored against the remaining agents, and exponential sub-optimality 
is incurred for all agents except for the most risk-sensitive ones. 


To address the issue of equilibrium bias, we propose a novel definition of regret tailored to the risk-sensitive setting, which we name as  risk-balanced regret. The risk-balanced regret takes into account of possibly diverse risk sensitivity of all agents and treats all agent in a symmetric way despite their different risk preferences. We prove a lower bound based on this new notion of regret, which suggests that it addresses the problem of equilibrium bias suffered by the naive regret.
In addition, we propose a novel self-play algorithm that learns Nash, correlated, and coarse correlated equilibria of the general-sum \MG involving multiple risk-sensitive agents, based on value iteration and optimistic exploration. We prove that the proposed algorithm achieves a nearly optimal upper bound for the risk-balanced regret. 
To the best of our knowledge, this work provides the first finite-sample guarantees in risk-sensitive \MARL based on the entropic risk measure.

In summary, our work presents the following theoretical contributions.
\begin{itemize}
    \item 
    We consider risk-sensitive \MARL under general-sum \MGs where each agent may have heterogeneous risk preferences. We identify the pitfall of a regret metric naively adapted from the risk-neutral setting: it induces policies that incur equilibrium bias by improperly biasing towards agents with the largest risk sensitivity among the agent population.
    \item 
    We propose a novel notion of regret (risk-balanced regret) for quantifying the performance of online learning algorithms for risk-sensitive agents, which accounts for the risk sensitivity of each agent and overcomes the issue of equilibrium bias.
    \item 
    We develop a novel  self-play algorithm for learning  Nash, correlated, and coarse correlated equilibria, and we prove that the proposed algorithm achieves a nearly optimal rate w.r.t.\ the risk-balanced regret compared to the lower bound.
\end{itemize}

\section{Related Work}
\MARL \citep{littman1994markov} has been a central topic in the literature of reinforcement learning, with an extensive line of work  devoted to studying risk-neutral settings and finite-sample guarantees. 
For example, 
theoretical studies have investigated two-player zero-sum \MGs and a number of near-optimal self-play algorithms \citep{xie2020learning,bai2020near,bai2020provable}; a nearly minimax optimal regret has also become available for two-player zero-sum \MGs under a linear kernel setup \citep{chen2021almost}. Further,  zero- and general-sum \MGs are studied by \citet{liu2020sharp} and \citet{zhang2020model} with nearly optimal sample complexity bounds, while \citet{dubey2021provably} considers multi-agent cooperative games with communication budget. The work of \citet{tian2020provably} develops an algorithm for multi-player general-sum \MGs that is agnostic of other agents' actions.


Risk-sensitive \MARL  (especially based on the entropic risk measure) has also been investigated in a thread of literature. The paper 
\citet{klompstra2000nash} characterizes Nash equilibria of risk-sensitive two-player general-sum control games, 
a special case of general-sum \MGs.
The works of \citet{basu2012zero,basu2014zero,bauerle2017zero} prove the existence of equilibria for discounted risk-sensitive two-player zero-sum games, and \citet{cavazos2019vanishing} establishes the convergence of risk-sensitive value functions for zero-sum finite games. 
Moreover, the work  \citet{huang2019model} considers non-cooperative muti-agent games with risk-averse agents and proposes an algorithm with  almost-sure convergence results. 
The author of \citet{wei2019nonzero}  proves the existence of Nash equilibria for risk-sensitive multi-agent general-sum games. Our work differs from these works, which mostly focus on asymptotic results, as we provide finite-sample guarantees for risk-sensitive \MARL.

\section{Preliminaries}

In this section, we introduce the  setup of general-sum \MG with multiple agents under entropic risk measure, starting with notations.

\subsection{Notation}

We denote $[n] \defeq 1,2,\ldots,n$ for any positive integer $n$. For any functions $f$ and $g$ defined on $\XX$, we let $f\leq g$ denote $f(x)\leq g(x)$ for all $x\in\XX$ and  $f\geq g$ to be defined similarly. For any  functions $f$ and $g$ defined on $\XX\subseteq\ZZ_+$, $f(n)\lesssim g(n)$ denotes $f(n)\leq C\cdot g(n)$ for every $n\in\XX$ with some universal constant $C>0$ and $f(n)\gtrsim g(n)$ is defined similarly. The notation $f(n)\asymp g(n)$ means that we have $f(n)\lesssim g(n)$ as well as $f(n)\gtrsim g(n)$.
We let $\log(\cdot)$ denote natural logarithm with base $e$ unless specified otherwise. 
The expression $\mathrm{polylog}(v_1,\ldots,v_n)$ is defined as $c_0\prod_{i\in[n]}\log(v_i)^{c_i}$ for some  universal constants $\{c_i\}_{i \ge 0}$ and variables $\{v_i\}_{i \in [n]}$. 
We also define $\mathrm{poly}(v_1,\ldots,v_n) \defeq c_0 \prod_{i\in[n]}v_i^{c_i}$.
Furthermore, we use $\widetilde O(\cdot)$ to denote $O(\cdot\ \mathrm{polylog}(\cdot))$; we define $\widetilde\Omega(\cdot)$ and $\widetilde\omega(\cdot)$ similarly. 

\subsection{Problem Setup}
We consider a risk-sensitive general-sum \MG with $M$ agents in the tabular setting, which can be represented as  $\mathrm{MG}(H,K,\cS,\{\cA_{m}\}_{m=1}^{M},\calP,\{r_{m}\}_{m=1}^{M},\{\beta_{m}\}_{m=1}^{M})$. Here, $H$ denotes the horizon (or the  number of steps in each episode), $\calS$ denotes the state space of size $S \defeq |\calS|$. For each agent $m\in[M]$, $\cA_{m}$ denotes the action space available to the agent with size $A_m \defeq |\calA_m|$, and for the convenience of notation, we further define $\cA\defeq\prod_{m\in[M]}\cA_{m}$ to be the action space of all $M$ agents, with cardinality $A \defeq \prod_{m\in[M]} A_m$. In particluar, we use $a_{h,m}$ to denote the action that agent $m$ takes at step $h$ and $a_h\defeq (a_{h,1},\ldots,a_{h,M})$ to denote the actions of all $M$ agents at step $h$.
Each agent has reward functions $r_m \defeq \{r_{h,m}\}_{h=1}^H$, where $r_{h,m}:\calS\times\calA_m\to[0,1]$ is the reward function at each step $h$. 
In the risk-sensitive setting, each agent $m$ takes a risk parameter $\beta_{m}\ne0$ representing its risk preference; the agent is risk-seeking if $\beta_m>0$ and risk-averse if $\beta_m<0$. We specify the way $\{\beta_m\}$ are involved in the agents' objectives in the next section.
The transition kernels $\calP = \{\calP_h\}_{h=1}^H$ provide the transition probability of the underlying \MG, \ie, the game transitions to $s'\in\calS$ with probability $\calP_h(s'\given s,a)$ at step $h$ given the current state-action pair $(s,a)\in\calS\times\calA$. The total number of episodes is represented by $K$.
In each episode $k$, we assume that the game begins at a fixed initial state $s^k_1 = s_1$.


In this paper, 
we study the self-play setting, and we outline the interaction protocol between the learning algorithm and agents in \cref{alg:proto}.
On the agent side, at each step $h\in[H]$ of an episode $k \in [K]$, each agent $m$ observes the current state $s_h$ and  chooses an action $a_{h,m}$ simultaneously with other agents. Then 
it 
receives its reward $r_{h,m}(s_h,a_h)$ and the game state transitions to the next state $s_{h+1}\sim\calP_h(\cdot\given s_h,a_h)$. An episode ends after the game reaches step $H+1$. 
We remark that the learning algorithm does not have access to  either the transition kernel or reward functions. 

\begin{algorithm}
\begin{algorithmic}[1]
    \For{each episode $k$}
        \For{each step $h$}
            \State The learning algorithm computes policies for each agent
            \State Each agent plays action from the policies prescribed by the learning algorithm
            \State Each agent receives the next state and its  reward (but not reward function)
            \State Information obtained by each agent is sent to the learning algorithm
        \EndFor
    \EndFor
\end{algorithmic}
\caption{Interaction protocol of self-play  \label{alg:proto}}
\end{algorithm}

\subsection{Policy and Value Functions}

For each agent $m\in[M]$, we define 
$\pi_{h,m}:\calS\to\Delta_{\calA_m}$
as its policy at step $h$ that 
maps from each state to a distribution on the action space $\calA_m$. 
We denote 
$\pi_h$ 
as the joint policy for all $M$ agents, where $\pi_h:\calS\to\Delta_\calA$ maps from each state to a distribution on the joint action space $\calA$ at each step $h$. When the decomposition $\pi_h\defeq \prod_{m=1}^M \pi_{h,m}$  exists, we say that $\pi_h$ is a product policy.

Given any policy $\pi=\{\pi_h\}$ and risk parameters $\{\beta_m\}$ of the $M$ agents, we define the value function $V_{h,m}^{\pi}$ for agent $m$ at step $h$ as the expected cumulative reward under the entropic risk measure for each state $s\in\calS$, \ie,
\begin{align*}
    V_{h,m}^{\pi}(s) \defeq \frac{1}{\beta_{m}}\log\left\{ \E_{\pi}\left[e^{\beta_m  \sum_{i=h}^H r_{i,m}(s_{i},a_{i})}\ \bigg|\ s_{h}=s\right]\right\},
\end{align*}
and the corresponding action-value function can be similarly defined for all state-action pairs $(s,a)\in\calS\times\calA$, \ie,
\begin{align*}
    Q_{h,m}^\pi(s,a) & \defeq \frac{1}{\beta_m} \log\Big\{  \expect_\pi\Big[e^{\beta_m  \sum_{i=h}^H r_{i,m}(s_{i},\pi_{i}(s_{i}))}\ \Big|\ s_h=s, a_h=a \Big]  \Big\}.
\end{align*}
We remark that $\beta_m>0$ represents a risk-seeking agent and $\beta_m < 0$ represents a risk-averse agent; the agent tends to be   risk-neutral as $\beta_m\to 0$.

\subsection{Equilibrium}

We aim to learn a policy for all agents through $K$ episodes of interactions with the environment, so as to reach certain equilibria for the given general-sum \glspl*{MG}. 
In this paper, we consider three types of equilibria: \NE, \gls*{CE}, and \gls*{CCE}. 
Before we provide their definitions, we first set some additional notations.
For any policy $\pi$, we denote $\pi_{-m}$ to be the joint policy of all agents except agent $m$. We say that $\pi_m^{*}(\pi_{-m})$ is a best response policy  for agent $m$ given $\pi_{-m}$, if it holds that $V_{h,m}^{\pi_m^{*}(\pi_{-m}),\pi_{-m}}(s) = \sup_{\nu} V_{h,m}^{\nu,\pi_{-m}}(s)$ for all $(h,s)\in [H]\times\cS$. 
We denote the value function under the best response of agent $m$ as $V_{h,m}^{*,\pi_{-m}}\defeq V_{h,m}^{\pi_m^{*}(\pi_{-m}),\pi_{-m}}$ for short. The action-value function for the best response $Q_{h,m}^{\pi_m^{*}(\pi_{-m}),\pi_{-m}}$ as well as its shorthand  $Q_{h,m}^{*,\pi_{-m}}$ is similarly defined for all agents $m\in[M]$.

An equilibrium implies that altering the policy of any single agent alone cannot improve its utility.
We say that a product policy $\pi$ is a \NE if the maximum payoff difference over all agents is zero, \ie, 
$    \max_{m\in[M]} (V_{1,m}^{*,\pi_{-m}} - V_{1,m}^{\pi})(s_1) = 0.$
Moreover, we say that a joint policy $\pi$ (not necessarily a product policy) is a \gls*{CCE} if $\max_{m\in[M]} (V_{1,m}^{*,\pi_{-m}} - V_{1,m}^{\pi})(s_1) = 0$. 
With regard to \gls*{CE}, we define strategy modification $\psi_m\defeq \{\psi_{h,s,m}\}_{(h,s)\in[H]\times\calS}$, where $\psi_{h,s,m}$ maps from $\calA_m$ to itself for each agent $m\in[M]$, state $s\in\calS$, and step $h\in[H]$. Let $\Psi_m$ denote the set of all possible strategy modifications $\psi_m$ available to agent $m$. When we apply a strategy modification $\psi_{m}\in\Psi_m$ to a policy $\pi$,  if $(a_{h,1},\ldots,a_{h,M})$ is the  $\pi$-induced actions for all agents $m\in[M]$ given state $s$ and step $h$, then the modified policy $\psi_m \diamond \pi$ plays the modified joint action $(a_{h,1},\ldots,a_{h,m-1},\psi_{h,s,m}(a_{h,m}),a_{h,m+1},\ldots,a_{h,M})$.
Given the definition of strategy modification, we say that a joint policy $\pi$ is a \gls*{CE} if it satisfies that $\max_{m\in[M]}\max_{\psi\in\Psi_m} (V_{1,m}^{\psi\diamond\pi} - V_{1,m}^{\pi})(s_1) = 0$,  which means that the best strategy modification for any single agent cannot improve its utility. 
We remark that the concepts of NE, CE and CCE are closely related. Indeed, it can be shown that CCE is a sub-class of CE, which is in turn a sub-class of NE \citep{nisan2007algorithmic}.
Since NE always exists, CE and CCE  always exist as well.


\section{Regret and Equilibrium Bias}\label{sec:regret}

Before diving into our algorithm and regret analysis, we first discuss some of the pitfalls of standard regrets as a natural generalization of their risk-neutral counterparts, \ie, $\overline{\mathrm{Regret}}_{\mathsf{NE}}(K)$, $\overline{\mathrm{Regret}}_{\mathsf{CCE}}(K)$, and $\overline{\mathrm{Regret}}_{\mathsf{CE}}(K)$. 

\subsection{A Naive Definition of Regret and Its Pitfalls}\label{sec:pitfall}

In existing literature, most research has focused on   risk-neutral \MARL, where algorithm performance is measured by regret: 
\begin{align*}
    \text{RN-Regret}_{\mathsf{NE}}(K) \defeq \sum_{k\in[K]}\max_{m\in[M]}(\widetilde{V}_{1,m}^{*,\pi_{-m}^{k}}-\widetilde{V}_{1,m}^{\pi^{k}})(s_{1}),
\end{align*}
In the above,  $\widetilde{V}^{\pi}_{h,m} (s) \defeq \expect_\pi [\sum_{i=h}^H r_{i,m}(s_{i},\pi_{i}(s_{i})) \given s_h=s ]$ is the risk-neutral value function.
A natural extension of this regret to the risk-sensitive setting would be to replace the risk-neutral value functions by their risk-sensitive counterparts, as in
\begin{align}
    \overline{\reg}_{\mathsf{NE}}(K) \defeq \sum_{k\in[K]}\max_{m\in[M]}(V_{1,m}^{*,\pi_{-m}^{k}}-V_{1,m}^{\pi^{k}})(s_{1}). \label{eq:naive_regret}
\end{align}

Similar definitions can be made for \CE and \CCE. 
It can be seen that the definition in \eqref{eq:naive_regret} generalizes the regret for risk-sensitive RL studied in the single-agent setting \citep{fei2020risk,fei2021exponential,fei2022cascaded} to the multi-agent setting. 

Unfortunately, the naive regret definition in \eqref{eq:naive_regret} has its shortcomings. Before we discuss them in details, we introduce some notations.
Let $m_* \defeq \argmax_{m\in[M]}|\beta_m|$ denote the index of the agent that is the most risk-sensitive among all $[M]$ agents, and if necessary, we may break the tie in an arbitrary way. We also set $\beta_* \defeq \beta_{m_*}$ to be the risk parameter of agent $m_*$. 
For notational convenience, we define a risk-dependent factor $\Phi_u(\beta)$ for any $\beta \ne 0$ 
and $u > 0$ as
\begin{align*}
    \Phi_{u}(\beta)\coloneqq\frac{1}{\left|\beta\right|u}(e^{\left|\beta\right|u}-1).
\end{align*}
Note that $\Phi_u(\beta)$ is an even function in $\beta$ and increases exponentially in $|\beta|$ and $u$.
To showcase the deficiency of the naive definition of regret in \eqref{eq:naive_regret}, we start by presenting a lower bound for the regret in the following theorem.

\begin{theorem}\label{thm:reg_lower_naive}
    For $H \geq 8$, $K \geq \max\{16e^{|\beta_*|(H-1)}, 16H\}$, and $\log\log K \gtrsim |\beta_*|(H-1)$, there exists an \MG such that any algorithm obeys 
    \begin{align*}
        \E\big[\overline{\reg}_{\mathsf{NE}}(K)\big] 
        =  \Phi_{H}(\beta_{*})\cdot \widetilde{\Omega}(\sqrt{K H^2}).
    \end{align*}
    The same bound holds for $\E\big[ \overline{\reg}_{\mathsf{CE}}(K) \big]$ and $\E\big[ \overline{\reg}_{\mathsf{CCE}}(K) \big]$. 
\end{theorem}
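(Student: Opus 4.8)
The plan is to reduce the bound to a single-agent problem driven entirely by the most risk-sensitive agent $m_*$. I would build the hard \MG so that every agent $m\neq m_*$ has a singleton action set ($A_m=1$) and a constant reward; such agents are then passive, their best-response gaps are identically zero, and a joint policy over $\calA$ is nothing but a policy for $m_*$. On this instance the maximum over $m$ in $\overline{\reg}_{\mathsf{NE}}(K)$ is attained by $m_*$; and since the only decision of $m_*$ occurs at a single pair $(h,s)$, its strategy modifications merely re-route probability among its own actions, so the best strategy-modified value equals the best-response value. Hence $\overline{\reg}_{\mathsf{NE}}(K)$, $\overline{\reg}_{\mathsf{CE}}(K)$, and $\overline{\reg}_{\mathsf{CCE}}(K)$ all coincide on this instance with $\sum_{k\in[K]}(V_{1,m_*}^{*,\pi_{-m_*}^{k}}-V_{1,m_*}^{\pi^{k}})(s_1)$, the best-response regret of $m_*$ in the induced single-agent risk-sensitive MDP, and it suffices to lower bound this one quantity (or, after the reduction, to invoke an existing single-agent risk-sensitive lower bound).

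\textbf{Step 2: a risk-amplifying bandit instance.} For $m_*$ I would use a constant-size, two-action MDP whose only decision is at $(h,s)=(1,s_1)$: action $a$ sends the game to a ``good'' absorbing state with probability $p_a$ and to a ``bad'' absorbing state with probability $1-p_a$, the good state paying reward $1$ at each of the remaining steps and the bad state paying $0$. (I describe $\beta_*>0$; the risk-averse case $\beta_*<0$ is symmetric after exchanging the two successor states.) Then the value of playing action $a$ equals $\tfrac{1}{\beta_*}\log\bigl(p_a(e^{\beta_*(H-1)}-1)+1\bigr)$, so choosing the baseline probability $p\asymp(e^{\beta_*(H-1)}-1)^{-1}$ puts the value in the regime where the map $p_a\mapsto\text{value}$ has derivative of order $(e^{\beta_*(H-1)}-1)/\beta_*=(H-1)\Phi_{H-1}(\beta_*)$, and a perturbation of size $\Delta$ in the transition probability moves the value by $\asymp(H-1)\Phi_{H-1}(\beta_*)\,\Delta$. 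This exponential amplification of the bandit-style gap by the risk factor $\Phi_{H-1}(\beta_*)$ is the core of the construction. I would take the two instances assigning $(p,p+\Delta)$ and $(p+\Delta,p)$ to the two actions, with $\Delta\asymp\sqrt{p/K}$; then $\Delta\le p$, and since $K\ge16e^{|\beta_*|(H-1)}\gtrsim e^{|\beta_*|(H-1)}-1$ one also has $\Delta(e^{\beta_*(H-1)}-1)\lesssim1$, which keeps the above linearization of the value gap valid.

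\textbf{Step 3: indistinguishability and final bookkeeping.} Each episode reveals to the algorithm only a $\mathrm{Ber}(p_a)$ sample from the action it played, so a standard two-point argument (Le Cam / KL divergence, with per-episode divergence $\asymp\Delta^2/p$ and $K\Delta^2/p\asymp1$) shows that no algorithm can identify the optimal action over $K$ episodes with error probability bounded away from $1/2$; hence on one of the two instances it plays sub-optimally a constant fraction of episodes, incurring expected regret $\gtrsim K\cdot\Delta\cdot(H-1)\Phi_{H-1}(\beta_*)=\sqrt{pK}\,(H-1)\Phi_{H-1}(\beta_*)\asymp\tfrac{\sqrt{K}}{|\beta_*|}\,e^{|\beta_*|(H-1)/2}$, using $p\asymp e^{-|\beta_*|(H-1)}$ and $(H-1)\Phi_{H-1}(\beta_*)=(e^{|\beta_*|(H-1)}-1)/|\beta_*|$. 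It remains to cast this into the stated form: writing $\tfrac{\sqrt{K}}{|\beta_*|}e^{|\beta_*|(H-1)/2}=\Phi_H(\beta_*)\,\sqrt{KH^2}\cdot\tfrac{e^{|\beta_*|(H-1)/2}}{e^{|\beta_*|H}-1}$ and bounding $\tfrac{e^{|\beta_*|H}-1}{e^{|\beta_*|(H-1)/2}}\le e^{|\beta_*|(H+1)/2}\le e^{|\beta_*|(H-1)}\le(\log K)^{\Theta(1)}$ --- where the first step is elementary, the second uses $H\ge8$, and the third is exactly the hypothesis $\log\log K\gtrsim|\beta_*|(H-1)$ --- shows the suppressed factor is $\ge1/\mathrm{polylog}(K)$, so the bound reads $\Phi_H(\beta_*)\cdot\widetilde\Omega(\sqrt{KH^2})$. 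The residual hypotheses ($H\ge8$, $K\ge16H$, and the absolute constant in the $K$-condition) are mild sizing requirements ensuring $\Delta\le1$, admissibility of the perturbation, and that $\sqrt{KH^2}$ is the governing scale. I expect the main obstacle to be the joint calibration in Steps 2--3: choosing $p$ and $\Delta$ so that the value gap is amplified by the full risk factor while the two instances remain statistically indistinguishable across all $K$ episodes; every stated condition on $H$, $K$, and $\beta_*$ is present precisely to make this balance close.
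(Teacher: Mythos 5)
Your overall route is the same as the paper's: reduce the general-sum game to a single-agent problem for the most risk-sensitive agent $m_*$ by making all other agents irrelevant (the paper keeps their action sets but gives them constant rewards and lets transitions depend only on $m_*$'s action; your singleton-action variant is equivalent), then embed a two-armed bandit whose arms pay $H-1$ or $0$ with success probabilities near $e^{-|\beta_*|(H-1)}$, perturb by $\Delta\asymp\sqrt{p/K}$, run a two-point KL/Le Cam argument with total divergence $\asymp K\Delta^2/p\asymp 1$, and convert the resulting $\frac{e^{|\beta_*|(H-1)}-1}{|\beta_*|}\sqrt{Kp}$ bound into $\Phi_H(\beta_*)\cdot\widetilde\Omega(\sqrt{KH^2})$ using $\log\log K\gtrsim|\beta_*|(H-1)$ and $H\ge 8$. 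Your final bookkeeping and the role of each hypothesis match the paper's; swapping both arms between the two instances (rather than keeping one arm common, as the paper does) is an equally valid form of the two-point argument.

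There is, however, one concrete gap: your calibration only covers the regime $|\beta_*|(H-1)\gtrsim 1$. The theorem's hypotheses permit arbitrarily small $|\beta_*|$, and then your baseline $p\asymp(e^{|\beta_*|(H-1)}-1)^{-1}$ exceeds $1$ and is not a valid probability; moreover the identification $p\asymp e^{-|\beta_*|(H-1)}$ you invoke in Step 3 no longer agrees with it, and with $p$ near $1$ the KL scales as $\Delta^2/(1-p)$ rather than $\Delta^2/p$, so the $\Delta$-calibration changes. The paper handles this by a case split in its bandit lemma: $p_2=e^{-|\beta_*|(H-1)}$ when $|\beta_*|(H-1)\ge\log 4$ and $p_2=1/H$ when $|\beta_*|(H-1)\le\log H$, with the same downstream argument in both cases; you would need an analogous second parameterization (or a cap on $p$) to cover small risk parameters. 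A smaller omission: under the entropic risk measure the value of a randomized episode policy is $\frac{1}{\beta_*}\log\big(\Pr[a_1]\,\E e^{\beta_* r_{a_1}}+\Pr[a_2]\,\E e^{\beta_* r_{a_2}}\big)$, not a probability mixture of the arm values, so the step "plays sub-optimally a constant fraction of episodes $\Rightarrow$ regret $\gtrsim K\Delta\cdot(H-1)\Phi_{H-1}(\beta_*)$" needs the explicit log-ratio lower bound (the paper uses $\log(1+x)\ge x/2$ on the relevant range); this is routine but should be stated.
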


The proof is provided in \cref{sec:proof_reg_lower_naive}.
\cref{thm:reg_lower_naive} generalizes the lower bound of the single-agent setting \cite[Theorem 3]{fei2020risk} to the multi-agent setting . 
We note that the lower bound in \cref{thm:reg_lower_naive} depends on the largest risk parameter $\beta_*$ (in absolute value) among all agents through the exponential factor $\Phi_H(\beta_*)$.
This yields undesirable implications both in theory and practice.

\paragraph{Theoretical pitfalls.}
The naive regret defined in \eqref{eq:naive_regret} may induce \textit{equilibrium bias}, by which some learned policies only account for the most risk-sensitive agents while overlooking the remaining agents. 
To see this, let us consider the following instance of \MG. Assume $\calS = \{s\}$, $\calA_m = \{g, b\}$ for each $m\in[M]$ and $\{\beta_m\}$ are such that their absolute values are increasing in $m$ (so that $\beta_* = \beta_M$). We denote by $a_{-m}$ a joint action of all agents except agent $m$. For each $m$, we assume $r_{h,m}(s,(g,a_{-m})) = \Phi_H(\beta_*)\frac{1}{\sqrt{K}}$ and $r_{h,m}(s,(b,a_{-m})) = 0$ for all possible $a_{-m}$, and we consider $K \ge \Phi^2_H(\beta_*)$. In particular, for each agent, its reward functions only depend on its own action (and independent of actions of all the other agents). Under this setting, the \MARL problem can be decomposed into $M$ single-agent ones, and an \NE corresponds to each agent executing its own optimal policy in the respective single-agent problem.
Now suppose an algorithm generates $\{\overline{\pi}^k\}$ that incurs   $\overline{\reg}_{\mathsf{NE}}(K) = \Phi_H(\beta_*) \sqrt{KH^2}$ (\eg\ by taking $\overline{\pi}^k_{h,m}(s) = b$ for $m\in [M-1]$ and $\overline{\pi}^k_{h,M}(s) = g$). 
It can be seen that the attained naive regret matches the lower bound in \cref{thm:reg_lower_naive} and the algorithm appears nearly optimal. However, existing results in  \citet{fei2021exponential} show that an exponentially smaller regret, \ie on the order of or smaller than  $\Phi_H(\beta_m) \sqrt{KH^2}$ (since $\Phi_H(\beta)$ increases exponentially in $|\beta|$), can be achieved by applying a single-agent algorithm to each agent individually. Therefore, under the naive regret, the policies $\{\overline{\pi}^k\}$ only perform nearly optimally for the agent with the largest risk sensitivity while being exponentially sub-optimal for all the other agents.

\paragraph{Practical pitfalls.}

While the concept of equilibrium bias has been primarily explored in theoretical contexts, its real-world implications are far-reaching and often detrimental. In the realm of investment, this bias could disproportionately favor the most risk-seeking or risk-averse investors, potentially leading to adverse impacts and instability in economic activities. Similarly, in the world of multiplayer online games, such a bias tends to favor a select few of the most aggressive or passive players, thereby creating an imbalanced gaming environment and diminishing the experience for other participants.

Given the significant shortcomings of equilibrium bias, as highlighted by the naive regret formulation \eqref{eq:naive_regret}, there emerges a compelling need for an alternative performance metric in risk-sensitive \MARL that can more effectively address these issues.

\subsection{Risk-Balanced Regret}

The above discussion
motivates us to propose a new notion of regret, which we call \emph{risk-balanced regret}. We provide its definition below.

\begin{definition}
\label{def:regret}
For product policies $\{\pi^k\}_{k=1}^K$, we define the risk-balanced regret  with respect to \NE as 
\begin{equation}
    \reg_{\nash}(K)=\sum_{k\in[K]}\max_{m\in[M]}\frac{(V_{1,m}^{*,\pi_{-m}^{k}}-V_{1,m}^{\pi^{k}})(s_{1})}{\Phi_{H}(\beta_{m})}.
    \label{eq:reg_normalized}
\end{equation}

Moreover, 
for joint policies $\{\pi^k\}_{k=1}^K$, we define the risk-balanced regret with respect to \CE as
\begin{align*}
    \mathrm{Regret}_{\mathsf{CE}}(K) \defeq \sum_{k \in [K]} \max_{m\in[M]}\max_{\psi\in\Psi_m} \frac{(V_{1,m}^{\psi\diamond\pi^k} - V_{1,m}^{\pi^k})(s_1)}{\Phi_H(\beta_m)},
\end{align*}
and with respect to \CCE as 
\begin{align*}
    \mathrm{Regret}_{\mathsf{CCE}}(K) \defeq \sum_{k \in [K]} \max_{m\in[M]} \frac{(V_{1,m}^{*,\pi_{-m}^k} - V_{1,m}^{\pi^k})(s_1)}{\Phi_H(\beta_m)}.
\end{align*}

\end{definition}

It is important to note that while the regret definitions for \NE and \CCE bear similarities, the definition for \CCE is more inclusive, applying to all joint policies without the necessity of them being product policies, as is mandated in the \NE definition.
The term $\reg_{\nash}(K)$ can be interpreted as a normalization of the regret concept within the single-agent learning paradigm. Here, the risk-balanced factors ${\Phi_H(\beta_m)}$ play a pivotal role, symmetrically moderating the sub-optimality experienced by each agent in a dynamic manner, reflective of their respective risk sensitivities.
In conjunction with these concepts, we also introduce a spectrum of notions pertaining to approximate equilibria, each intricately connected to the framework of risk-balanced regret.

\begin{definition}
\label{def:apx_eqil}
We say a product policy $\pi$ is $(\beta, \eps)$-approximate \NE if 
\begin{align*}
     \max_{m\in[M]} \frac{(V_{1,m}^{*,\pi_{-m}} - V_{1,m}^{\pi})(s_1)}{\Phi_H(\beta_m)}
     \le \eps.
\end{align*}
In addition, we say a joint policy $\pi$ is $(\beta, \eps)$-approximate \CE if 
\begin{align*} \max_{m\in[M]}\max_{\psi\in\Psi_m} \frac{(V_{1,m}^{\psi\diamond\pi} - V_{1,m}^{\pi})(s_1)}{\Phi_H(\beta_m)}
     \le \eps,
\end{align*}
and $(\beta, \eps)$-approximate \CCE if 
    \begin{align*} \max_{m\in[M]} \frac{(V_{1,m}^{*,\pi_{-m}} - V_{1,m}^{\pi})(s_1)}{\Phi_H(\beta_m)}
     \le \eps.
\end{align*}

\end{definition}

    
Furthermore, a simple relationship between $\reg_{\nash}(K)$ and $\overline{\reg}_{\nash}(K)$ may be observed, as in 
\begin{align}
    \reg_{\nash}(K) &=\sum_{k\in[K]}\max_{m\in[M]}\frac{(V_{1,m}^{*,\pi_{-m}^{k}}-V_{1,m}^{\pi^{k}})(s_{1})}{\Phi_{H}(\beta_{m})} \nonumber \\ 
    &\ge \sum_{k\in[K]}\max_{m\in[M]}\frac{(V_{1,m}^{*,\pi_{-m}^{k}}-V_{1,m}^{\pi^{k}})(s_{1})}{\max_{\ell\in[M]}\Phi_{H}(\beta_{\ell})} \nonumber \\ 
    &= \frac{1}{\Phi_{H}(\beta_{*})} \sum_{k\in[K]}\max_{m\in[M]}(V_{1,m}^{*,\pi_{-m}^{k}}-V_{1,m}^{\pi^{k}})(s_{1}) \nonumber \\ 
    &= \frac{1}{\Phi_{H}(\beta_{*})}\overline{\reg}_{\nash}(K), \label{eq:simple_ineq}
\end{align}

where the inequality holds since $\Phi_H(\beta)$ is increasing in $|\beta|$. It is not hard to see that similar equations to \eqref{eq:simple_ineq} can be derived for $\reg_{\mathsf{CE}}(K)$ and $\reg_{\mathsf{CCE}}(K)$. 
In view of \eqref{eq:simple_ineq}, we have that the risk-balanced regret \eqref{eq:reg_normalized} is more general than the naive regret  \eqref{eq:naive_regret} in the sense that, for any algorithm, attaining an upper bound on $\reg_{\nash}(K)$ implies that it also attains an upper bound on $\overline{\reg}_{\nash}(K)$. Specifically, if $\reg_{\nash}(K) \le U$ for some $U \ge  0$, then we have  $\overline{\reg}_{\nash}(K) \le \Phi_H(\beta_*) \reg_{\nash}(K) \le \Phi_H(\beta_*) U$, thanks to \eqref{eq:simple_ineq}.

Combined with \cref{thm:reg_lower_naive},
the inequality \eqref{eq:simple_ineq} directly leads to the following lower bound for risk-balanced regret.

\begin{theorem}
\label{thm:reg_lower} 
    Under the same setting as \cref{thm:reg_lower_naive}, 
    any algorithm obeys the lower bound 
    \begin{align*}
        \E\big[ \reg_{\nash}(K) \big] = \widetilde{\Omega} (\sqrt{K H^2}).
    \end{align*}
    The same result holds for $\E\big[ \reg_{\mathsf{CE}}(K) \big]$ and $\E\big[ \reg_{\mathsf{CCE}}(K) \big]$.
\end{theorem}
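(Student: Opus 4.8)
The plan is to obtain Theorem~\ref{thm:reg_lower} as an immediate corollary of Theorem~\ref{thm:reg_lower_naive} combined with the normalization inequality~\eqref{eq:simple_ineq}. Concretely, I would take the very same hard instance of \MG guaranteed by Theorem~\ref{thm:reg_lower_naive}. On that instance, every algorithm satisfies $\E[\overline{\reg}_{\nash}(K)] = \Phi_{H}(\beta_{*})\cdot\widetilde{\Omega}(\sqrt{KH^2})$. The chain of inequalities in~\eqref{eq:simple_ineq} is valid for any fixed realization of the played policies $\{\pi^k\}$, since it uses only that $\Phi_{H}(\cdot)$ is increasing in $|\beta|$ (hence $\Phi_{H}(\beta_m)\le\Phi_{H}(\beta_{*})$ for every $m\in[M]$) and that the per-episode payoff gaps $(V_{1,m}^{*,\pi_{-m}^{k}}-V_{1,m}^{\pi^{k}})(s_1)$ are nonnegative by definition of the best-response value. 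Thus we have the pointwise bound $\reg_{\nash}(K)\ge\overline{\reg}_{\nash}(K)/\Phi_{H}(\beta_{*})$, and taking expectations over the randomness of the algorithm and the environment yields
\[
\E\big[\reg_{\nash}(K)\big]\ \ge\ \frac{1}{\Phi_{H}(\beta_{*})}\,\E\big[\overline{\reg}_{\nash}(K)\big]\ =\ \widetilde{\Omega}(\sqrt{KH^2}),
\]
which is the claimed bound.

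For the \CE and \CCE versions I would repeat the argument verbatim, invoking the analogues of~\eqref{eq:simple_ineq} for $\reg_{\mathsf{CE}}(K)$ and $\reg_{\mathsf{CCE}}(K)$ (noted immediately after~\eqref{eq:simple_ineq}) together with the \CE/\CCE parts of Theorem~\ref{thm:reg_lower_naive}; in the \CE case one additionally uses that $\max_{\psi\in\Psi_m}(V_{1,m}^{\psi\diamond\pi^k}-V_{1,m}^{\pi^k})(s_1)\ge 0$ because the identity strategy modification contributes zero. Nothing new is required: the exponential factor $\Phi_{H}(\beta_{*})$ appearing in the lower bound of Theorem~\ref{thm:reg_lower_naive} is precisely cancelled by the per-agent normalization built into the risk-balanced regret.

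Since all of the genuinely difficult work — constructing the hard \MG family, carrying out the information-theoretic (KL-divergence) comparison between instances, and establishing the $\Phi_{H}(\beta_{*})\widetilde{\Omega}(\sqrt{KH^2})$ lower bound — is already discharged in the proof of Theorem~\ref{thm:reg_lower_naive}, I do not anticipate any real obstacle in this derivation. The only point that warrants a moment of care is applying~\eqref{eq:simple_ineq} in the correct direction and noting that it holds \emph{pointwise} (before taking expectations), so that the factor $\Phi_{H}(\beta_{*})$ cancels cleanly rather than being absorbed into an $\widetilde{\Omega}$ that silently hides a constant depending exponentially on $\beta_{*}$.
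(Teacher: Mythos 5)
Your proposal is correct and is exactly the paper's argument: the paper obtains Theorem~\ref{thm:reg_lower} by combining the hard instance of Theorem~\ref{thm:reg_lower_naive} with the pointwise normalization inequality~\eqref{eq:simple_ineq} (and its \CE/\CCE analogues), so the factor $\Phi_H(\beta_*)$ cancels just as you describe. Your added care about nonnegativity of the payoff gaps and taking expectations is a fine, if routine, elaboration of the same route.
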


We remark that the results of \cref{thm:reg_lower} coincide with those of  \cref{thm:reg_lower_naive} for $M=1$ or when all agents take the same risk parameter.



Given \cref{thm:reg_lower}, let us discuss how the risk-balanced regret in \cref{def:regret} overcomes the issue of equilibrium bias suffered by the naive regret. We observe that a nearly optimal algorithm under the risk-balanced regret has to learn a policy for \emph{each} agent such that the (un-normalized) regret of agent $m$ is upper bounded by a quantity proportional to risk-balanced factor $\Phi_H(\beta_m)$.
To see this, we let $\reg_{m,\nash}(K) \defeq \sum_{k\in[K]} \frac{(V_{1,m}^{*,\pi_{-m}^{k}}-V_{1,m}^{\pi^{k}})(s_{1})}{\Phi_{H}(\beta_{m})}$ be the (normalized) regret incurred by agent $m$ alone. In order to match with a lower bound (\eg, the single-agent version of \cref{thm:reg_lower}), an algorithm must achieve the near-optimal regret bound $\reg_{m,\nash}(K) = \widetilde{O} (\sqrt{K \cdot \mathrm{poly}(H)})$ for each individual agent $m \in [M]$; otherwise, if for some agent $m'$  the algorithm incurs  $\reg_{m',\nash}(K) = \widetilde{\omega} ({\sqrt{K \cdot \mathrm{poly}(H)}})$, then we would have $\reg_{\nash}(K) \ge \reg_{m',\nash}(K) = \widetilde{\omega} ({\sqrt{K \cdot \mathrm{poly}(H)}})$, which is sub-optimal compared to the lower bound in \cref{thm:reg_lower} and results in a contradiction to our assumption that the algorithm is nearly optimal.
In the remaining paper, we present an algorithm and proves that it nearly attains the lower bound of \cref{thm:reg_lower}.

\section{Algorithm}\label{sec:algo}

In this section, we introduce the \textbf{M}ulti-\textbf{A}gent \textbf{R}isk-\textbf{S}ensitive \textbf{V}alue \textbf{I}teration algorithm, abbreviated as \texttt{MARS-VI}, presented in \cref{alg:MRSA}. This algorithm is designed to estimate \NE, \CE, and \CCE within the context of multi-agent general-sum \MGs, specifically tailored to the entropic risk measure.

\begin{algorithm}[ht]
\begin{algorithmic}[1]
    \Require number of episodes $K$
    
    \Ensure $\pihat$
    
    \State
    Set $\Delta_V \leftarrow H$, 
     and initialize $\{N_{h}(s,a)\}_{h\in[H]}$ and $\{N_{h}(s,a,\cdot)\}_{h\in[H]}$
    as zero functions
    
    \For{episode $k=1,2,\ldots,K$}
    
        \State $\forall m\in[M]:\quad\Vup_{H+1,m}(\cdot)\leftarrow0$, $\Vlo_{H+1,m}(\cdot)\leftarrow0$
        
        \For{step $h=H,H-1,\ldots,1$}\emph{ }\label{line:LSVI_estim_value_begin-1}
        
            \State $\forall m\in[M]:\quad\qup_{h,m}(\cdot,\cdot)\leftarrow e^{\beta_{m}(H-h+1)}$,\quad
            $\qlo_{h,m}(\cdot,\cdot)\leftarrow1$
            
            \For{$(m,s,a)\in[M]\times\cS\times\cA$ such that $N_{h}(s,a)\ge1$}
                
                \State $\Qup_{h,m}(s,a), \Qlo_{h,m}(s,a) \gets$ \texttt{Q-Update}$()$
            
            \EndFor
            
                 \State $\pi_{h}(\cdot|s)\leftarrow\eqsol(
               \{(-1)^{\II(\beta_m < 0)}  e^{\beta_{m}\Qup_{h,m}(s,\cdot)}\}_{m\in[M]})$
                \label{line:MRSA_solve_eq}
                
            \For{$(m,s)\in[M]\times\cS$} 

                \State $\Vup_{h,m}(s)\leftarrow\frac{1}{\beta_{m}}\log\{\sum_{a}\pi_{h}(a|s)e^{\beta_{m}\Qup_{h,m}(s,a)}\}$
                
                \State $\Vlo_{h,m}(s)\leftarrow\frac{1}{\beta_{m}}\log\{\sum_{a}\pi_{h}(a|s)e^{\beta_{m}\Qlo_{h,m}(s,a)}\}$
            
            \EndFor \label{line:LSVI_estim_value_end-1}
        
        \EndFor
        
        \If{$\max_{m \in [M]} (\Vup_{1,m} - \Vlo_{1,m} )(s_1) \le \Delta_V$}
            \State $\Delta_V \leftarrow (\Vup_{1,m} - \Vlo_{1,m} )(s_1), \quad 
            \widehat{\pi} \leftarrow
            \pi
            $
        \EndIf
        
        \State Receive $s_{1}$
        
        \For{step $h=1,2,\ldots,H$}
        \label{line:LSVI_exec_policy_begin-1}
        
        \State Take actions $a_{h}\sim\pi_{h}(\cdot|s_{h})$ and observe
        $r_{h}(s_{h},a_{h})$ and $s_{h+1}$
        
        \State $N_{h}(s_{h},a_{h})\leftarrow N_{h}(s_{h},a_{h})+1$
        
        \State $N_{h}(s_{h},a_{h},s_{h+1})\leftarrow N_{h}(s_{h},a_{h},s_{h+1})+1$
        
        \State $\Phat_{h}(\cdot|s_{h},a_{h})\leftarrow\frac{N_{h}(s_{h},a_{h},\cdot)}{N_{h}(s_{h},a_{h})}$
        
        \EndFor \label{line:LSVI_exec_policy_end-1}
    
    \EndFor
\end{algorithmic}
\caption{Multi-Agent Risk-Sensitive Value Iteration (\texttt{MARS-VI}) \label{alg:MRSA}}
\end{algorithm}

\begin{algorithm}[ht]
\begin{algorithmic}[1]
    \Require 
    All necessary variables from \cref{alg:MRSA}

    \State $\gamma_{h,m}(s,a)\leftarrow C\left|e^{\beta_{m}(H-h+1)}-1\right|\sqrt{\frac{S\iota}{N_{h}(s,a)}}$
    for some universal constant $C>0$ \label{line:LSVI_bonus}
                
    \State $\qup_{h,m}(s,a)\leftarrow e^{\beta_{m}r_{h,m}(s,a)}[\Phat_{h}e^{\beta_{m}\Vup_{h+1,m}}](s,a)$
    \label{line:MRSA_q_upper}
    
    \State $\qlo_{h,m}(s,a)\leftarrow e^{\beta_{m}r_{h,m}(s,a)}[\Phat_{h}e^{\beta_{m}\Vlo_{h+1,m}}](s,a)$
    \label{line:MRSA_q_lower}
    
    \State Update $\Qup_{h,m}(s,a)$ following \eqref{eqn:MRSA_Q_upper} \label{line:MRSA_Q_upper}
    
    \State Update $\Qlo_{h,m}(s,a)$ following \eqref{eqn:MRSA_Q_lower} \label{line:MRSA_Q_lower}
    
    \State Return $\Qup_{h,m}(s,a)$ and $\Qlo_{h,m}(s,a)$

\end{algorithmic}
\caption{\texttt{Q-Update} \label{alg:Q-update}}
\end{algorithm}

Within each episode $k$, we update the upper and lower confidence bounds $\Qup_{h,m}$ and $\Qlo_{h,m}$ of the action-value function using the estimated transition kernel based on samples collected from the past $k-1$ episodes.
We present the detailed updates of the confidence bounds in \cref{alg:Q-update}.
For each agent $m$, step $h$, and state-action pair $(s,a)$, we compute $\qup_{h,m}(s,a)$ and $\qlo_{h,m}(s,a)$ as estimates of the exponential action-value functions $\E_{s'}[e^{\beta_m[r_{h,m}(s,a)+\Vup_{h+1, m}(s')]}]$ and $\E_{s'}[e^{\beta_m[r_{h,m}(s,a)+\Vlo_{h+1,m}(s')]}]$, respectively.
We then transform  $\qup_{h,m}$ and $\qlo_{h,m}$ in \cref{line:MRSA_q_upper,line:MRSA_q_lower} of \cref{alg:Q-update} into estimates $\Qup_{h,m}$ and $\Qlo_{h,m}$, respectively, by incorporating a bonus term $\gamma_{h,m}$ and applying proper truncation:  
\begin{gather}
    \Qup_{h,m}(s,a)\gets
    \begin{cases}
        \frac{1}{\beta_{m}}\log\{\min\{\qup_{h,m}(s,a)+\gamma_{h,m}(s,a),e^{\beta_{m}(H-h+1)}\}\} & \text{if }\beta_{m}>0;\\
        \frac{1}{\beta_{m}}\log\{\max\{\qup_{h,m}(s,a)-\gamma_{h,m}(s,a),e^{\beta_{m}(H-h+1)}\}\} & \text{if }\beta_{m}<0,
    \end{cases} \label{eqn:MRSA_Q_upper} \\
    \Qlo_{h,m}(s,a)\leftarrow
    \begin{cases}
        \frac{1}{\beta_{m}}\log\{\max\{\qlo_{h,m}(s,a)-\gamma_{h,m}(s,a),1\}\} & \quad\quad\quad\quad\text{if }\beta_{m}>0;\\
        \frac{1}{\beta_{m}}\log\{\min\{\qlo_{h,m}(s,a)+\gamma_{h,m}(s,a),1\}\} & \quad\quad\quad\quad\text{if }\beta_{m}<0.
    \end{cases} \label{eqn:MRSA_Q_lower}
\end{gather}
The bonus term $\gamma_{h,m}$ facilitates Risk-Sensitive Optimism in the Face of Uncertainty \citep{fei2020risk} by augmenting  $\qup_{h,m}$ and  $\qlo_{h,m}$. Notice that the way  $\gamma_{h,m}$ joins the formulae depends on the sign of $\beta_m$ for each agent $m$. For the upper bound $\Qup_{h,m}$, we add bonus for $\beta_{m}>0$ and subtract it for $\beta_{m}<0$, whereas in the lower bound $\Qlo_{h,m}$, we subtract bonus for $\beta_{m}>0$ while adding it for $\beta_{m}<0$.  
This is because the exponential function $z \mapsto e^{\beta z}$ is an increasing function when $\beta > 0$ and decreasing function when $\beta < 0$; so are $\qup_{h,m}$ and $\qlo_{h,m}$ given their construction. 
Subtracting bonus, \ie, $\qup_{h,m}(s,a)-\gamma_{h,m}(s,a)$, under $\beta<0$ yields a smaller estimation of the exponential value function, and this implies a \emph{larger} upper confidence bound $\Qup_{h,m}$ of the action-value function. 
A similar argument holds for $\Qlo_{h,m}$. 
Another feature of \cref{alg:MRSA} is the difference in thresholding applied to $\Qup_{h,m}$ and $\Qlo_{h,m}$. This is due to the fact that $\qup_{h,m}$ and $\qlo_{h,m}$ are designed to estimate the upper and lower exponential value functions, whose upper and lower bounds are $e^{\beta(H-h+1)}$ and 1 respectively.


In \cref{line:MRSA_solve_eq} of \cref{alg:MRSA}, we update the policy by solving a one-step multi-agent game via the oracle subroutine $\eqsol$, which can be instantiated by existing solvers for \NE, \CE and \CCE,  respectively \citep{berg2017exclusion}.
Here, we assume that $\eqsol$ maximizes the utility of all agents. We therefore use the \textit{signed} exponential value estimates $\{(-1)^{\II(\beta_m < 0)}  e^{\beta_{m}\Qup_{h,m}(s,\cdot)}\}$ as input to the solver;
it can be seen that for $\beta_m < 0$, maximizing with respect to $-  e^{\beta_{m}\Qup_{h,m}(s,\cdot)}$ amounts to  maximizing with respect to $\Qup_{h,m}(s,\cdot)$. 
We also note that existing solvers for \gls*{CCE} and \gls*{CE} based on linear programming have polynomial time complexity, while those for \NE are PPAD-hard  \citep{daskalakis2013complexity}.


\section{Main Results}\label{sec:results}

In this section we present our main theoretical results, which comprise of upper bounds on the risk-balanced regret attained by \cref{alg:MRSA} and its induced sample complexity.

\begin{theorem}
\label{thm:reg_upper} 
    For any $\delta\in(0,1]$, with probability $1-\delta$, Algorithm \ref{alg:MRSA} attains the following regret upper bound when $\eqsol$ is instantiated as a NE solver: 
    \begin{align*}
        \reg_{\nash}(K)
        = \widetilde{O}\big(\sqrt{KH^{4}S^{2}A}\big).
    \end{align*}
    The same result holds for $\reg_{\mathsf{CE}}(K) $ and $ \reg_{\mathsf{CCE}}(K) $ when $\eqsol$ is instantiated as a \CE and \CCE solver, respectively.
    Moreover, with probability at least $\frac{2}{3}$, $\pihat$ output by \cref{alg:MRSA} is a $(\beta,\eps)$-approximate \NE, \CE or \CCE if $K = \widetilde{\Omega}(H^4 S^2 A /\eps^{2})$. 
\end{theorem}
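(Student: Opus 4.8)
The plan is to run the optimism-in-the-face-of-uncertainty argument on the exponential (entropic) scale, so that the normalizer $\Phi_H(\beta_m)$ emerges from the analysis rather than being imposed by hand. First I would fix a high-probability event $\mathcal{E}$ on which $\|\Phat_h(\cdot\mid s,a)-\calP_h(\cdot\mid s,a)\|_1\lesssim\sqrt{S\iota/N_h(s,a)}$ for every $(h,s,a)$ with $N_h(s,a)\ge1$ and every episode (the standard $\ell_1$ concentration for empirical kernels, with a union bound contributing the $\log$ factors in $\iota$); since $e^{\beta_m\Vup_{h+1,m}}$ and $e^{\beta_m\Vlo_{h+1,m}}$ lie in intervals of length at most $|e^{\beta_m(H-h+1)}-1|$, the prefactor in $\gamma_{h,m}$, this implies $|([\Phat_h-\calP_h]e^{\beta_m\Vup_{h+1,m}})(s,a)|\le\gamma_{h,m}(s,a)$ and the same with $\Vlo$. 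On $\mathcal{E}$ I would establish, by backward induction on $h$ within each episode $k$, the sandwich $\Qlo_{h,m}\le Q_{h,m}^{\pi^k}\le Q_{h,m}^{*,\pi_{-m}^k}\le\Qup_{h,m}$ and the corresponding one for the $V$'s at $s_1$. The pessimistic side needs only monotonicity of $z\mapsto e^{\beta_m z}$, the induction hypothesis, and $\mathcal{E}$, to check that the clipped quantities in \eqref{eqn:MRSA_Q_lower} stay at most $e^{\beta_m Q_{h,m}^{\pi^k}}$; the optimistic side additionally invokes the guarantee of $\eqsol$ on the signed payoffs $\{(-1)^{\II(\beta_m<0)}e^{\beta_m\Qup_{h,m}(s,\cdot)}\}$, namely that under $\pi_h$ no unilateral deviation (for \NE/\CCE) or strategy modification (for \CE) improves any agent's signed payoff, which combined with the exponential Bellman recursion for the best-response (resp.\ modified) value gives $(-1)^{\II(\beta_m<0)}e^{\beta_m\Vup_{h,m}(s)}\ge(-1)^{\II(\beta_m<0)}e^{\beta_m V_{h,m}^{*,\pi_{-m}^k}(s)}$, i.e.\ $\Vup_{h,m}\ge V_{h,m}^{*,\pi_{-m}^k}$. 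The three equilibrium notions go through in parallel, only the admissible deviation class changing.

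On $\mathcal{E}$ we then have $(V_{1,m}^{*,\pi_{-m}^k}-V_{1,m}^{\pi^k})(s_1)\le(\Vup_{1,m}^k-\Vlo_{1,m}^k)(s_1)$, and the analogue with $\max_\psi(V_{1,m}^{\psi\diamond\pi^k}-V_{1,m}^{\pi^k})(s_1)$ for \CE, so it remains to bound the confidence width, which I would do on the exponential scale. Setting $w_{h,m}^k(s)\defeq(-1)^{\II(\beta_m<0)}(e^{\beta_m\Vup_{h,m}^k(s)}-e^{\beta_m\Vlo_{h,m}^k(s)})\ge0$, the updates of \cref{alg:Q-update} and $\mathcal{E}$ produce the linear recursion $w_{h,m}^k(s)\le\E_{a\sim\pi_h^k}[e^{\beta_m r_{h,m}(s,a)}(\calP_h w_{h+1,m}^k)(s,a)]+O(\E_{a\sim\pi_h^k}[\gamma_{h,m}(s,a)])$; unrolling to step $1$ attaches the reward prefactor $\prod_{i<h}e^{\beta_m r_{i,m}}$ to the step-$h$ bonus, and converting back to the log scale via $\tfrac{1}{|\beta_m|}\log(1+x)\le x/|\beta_m|$ together with $e^{\beta_m\Vlo_{1,m}^k}\ge1$ (when $\beta_m>0$) or $e^{\beta_m\Vup_{1,m}^k}\ge e^{-|\beta_m|H}$ (when $\beta_m<0$) attaches one more conversion factor. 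The hard part---and the crux of the whole result---is to see that, for each $h$ and regardless of the sign of $\beta_m$, the conversion factor, the prefactor, and the bonus magnitude $|e^{\beta_m(H-h+1)}-1|$ multiply to at most $e^{|\beta_m|H}-e^{|\beta_m|(h-1)}\le e^{|\beta_m|H}-1=|\beta_m|H\,\Phi_H(\beta_m)$: for $\beta_m>0$ via the telescoping $e^{|\beta_m|(h-1)}(e^{|\beta_m|(H-h+1)}-1)=e^{|\beta_m|H}-e^{|\beta_m|(h-1)}$, and for $\beta_m<0$ because the conversion factor $e^{|\beta_m|H}$ plays exactly the same telescoping role against $1-e^{-|\beta_m|(H-h+1)}$. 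The leftover $1/|\beta_m|$ then leaves precisely $H\,\Phi_H(\beta_m)$, i.e.
\[
(\Vup_{1,m}^k-\Vlo_{1,m}^k)(s_1)\ \lesssim\ H\,\Phi_H(\beta_m)\cdot\E_{\pi^k}\!\Bigl[\sum_{h=1}^{H}\sqrt{S\iota/N_h^k(s_h,a_h)}\Bigr],
\]
whose division by $\Phi_H(\beta_m)$ is $\beta$-free. Making this telescoping close uniformly over risk-seeking and risk-averse agents and over the two clipping cases of \eqref{eqn:MRSA_Q_upper}--\eqref{eqn:MRSA_Q_lower} is where I expect the genuine effort to go; the rest is routine.

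Since the last display is independent of $m$, $\reg_{\nash}(K)\le\sum_k\max_m(\Vup_{1,m}^k-\Vlo_{1,m}^k)(s_1)/\Phi_H(\beta_m)\lesssim H\sum_k\E_{\pi^k}[\sum_h\sqrt{S\iota/N_h^k(s_h,a_h)}]$; I would pass from the conditional expectation to the realized trajectory via Azuma--Hoeffding (martingale differences bounded by $\lesssim H\sqrt{S\iota}$, a lower-order $\widetilde O(H\sqrt{SK})$ term), then invoke the pigeonhole bound $\sum_{k,h}1/\sqrt{N_h^k(s_h^k,a_h^k)}\lesssim\sum_h\sqrt{SA\sum_{s,a}N_h^{K+1}(s,a)}\lesssim H\sqrt{SAK}$, the episodes with a freshly visited triple (at most $HSA$ of them, contributing at most $H$ each) adding only a lower-order $\widetilde O(H^2SA)$. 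This gives $\reg_{\nash}(K)=\widetilde O(\sqrt{KH^4S^2A})$, and the identical chain with a \CE or \CCE solver gives the stated bounds for $\reg_{\mathsf{CE}}(K)$ and $\reg_{\mathsf{CCE}}(K)$. For the sample-complexity claim I would use an online-to-batch argument: \cref{alg:MRSA} retains the episode $k^\star$ minimizing the normalized confidence gap, so $\max_m(\Vup_{1,m}^{k^\star}-\Vlo_{1,m}^{k^\star})(s_1)/\Phi_H(\beta_m)\le\tfrac{1}{K}\sum_k\max_m(\Vup_{1,m}^k-\Vlo_{1,m}^k)(s_1)/\Phi_H(\beta_m)=\widetilde O(\sqrt{H^4S^2A/K})$, and by the optimism sandwich the true normalized equilibrium gap of $\pihat=\pi^{k^\star}$ is no larger, hence at most $\eps$ whenever $K=\widetilde\Omega(H^4S^2A/\eps^2)$; taking $\delta$ a small absolute constant makes this hold with probability at least $\tfrac{2}{3}$, and the \CE/\CCE versions follow verbatim with the corresponding notion from \cref{def:apx_eqil}.
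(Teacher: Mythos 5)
Your proposal is correct and follows essentially the same route as the paper: optimism run on the exponential (entropic) scale with bonus $|e^{\beta_m(H-h+1)}-1|\sqrt{S\iota/N}$, an optimism/pessimism sandwich established by backward induction using the equilibrium property of $\eqsol$ on the signed payoffs, a confidence-width recursion whose telescoping $e^{|\beta_m|(h-1)}(e^{|\beta_m|(H-h+1)}-1)=e^{|\beta_m|H}-e^{|\beta_m|(h-1)}$ is exactly how the paper's per-step normalizers make the bound agent-free and produce $H\,\Phi_H(\beta_m)$, followed by Azuma plus pigeonhole and an online-to-batch step. The only differences are technical variants, not a different argument: you use an $\ell_1$ concentration of the empirical kernel where the paper proves a covering-based uniform concentration over the exponential function class, you keep per-agent width recursions where the paper aggregates all agents into a single dominating sequence $U_h^k$, and for the sample-complexity claim you select the episode with smallest normalized gap where the paper uses a uniformly sampled policy (note the algorithm as written tracks the unnormalized gap, a looseness shared by the paper's own argument).
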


The proof is provided in \cref{sec:proof_reg_upper}. We make the following remarks for \cref{thm:reg_upper}.

\paragraph{Comparison with the lower bound.}
In view of Theorem \ref{thm:reg_lower}, we see that the above regret upper bound is nearly optimal up to a logarithmic factor in $K$ and  polynomial factors of $H$, $S$ and $A$.
Note that although agents may take different risk parameters $\{\beta_m\}$, the upper bound in Theorem \ref{thm:reg_upper} is not influenced by the difference among $\{\beta_m\}$ as they have been  ``normalized out'' 
through the risk-balanced factor $\{\Phi_H(\beta_m)\}$ in the definition of regret.  

\paragraph{Comparison with existing results.}
Our results can also be connected back to some of the existing works, showing that our bounds generalize theirs: 
\begin{itemize}

    \item 
    When $\beta_{m}\to0$ for all $m\in[M]$, our \cref{thm:reg_upper} recovers the upper bound for \glspl{MG} with risk-neutral agents \citep[Theorem 16]{liu2020sharp}, since $\lim_{b \to 0} \Phi_H(b) = 1$ and $\reg_{\mathsf{NE}}(K)$ tends to  $\text{RN-Regret}_{\mathsf{NE}}(K)$
    
    \item 
    When $M=1$, the multi-agent setting reduces to the single-agent setting, and \cref{thm:reg_upper} implies that an agent with risk parameter $\beta$ incurs the normalized regret (by $\Phi_H(\beta)$) of order  $\widetilde{O}\big(\sqrt{KH^{4}S^{2}A}\big)$, recovering the bound presented in  \citet[Theorem
    1]{fei2021exponential} up to logarithmic factors.
\end{itemize}

\paragraph{Technical highlights.}

The proof crucially relies on controlling the differences between upper and lower confidence bounds $\{\Vup^k_{h,m}\}$ and $\{\Vlo^k_{h,m}\}$. In particular, we show that the quantities $\{\frac{e^{\beta_{m}\Vup_{h,m}^{k}} - e^{\beta_{m} \Vlo_{h,m}^{k}}}{\rho_{h,m}}\}$ (where $\rho_{h,m}$ is a  quantity that depends on $h,m$) is upper bounded by a carefully constructed sequence  $\{U_{h}^{k}\}$, such that $U_{1}^{k}(s^k_1)\ge\frac{(V_{1,m}^{*,\pi^k_{-m}}-V_{1,m}^{\pi^k})(s^k_{1})}{\Phi_{H}(\beta_{m})}$ for all $m\in[M]$. We then analyze the evolution of the sequence $\{U_{h}^{k}\}$, which yields the upper bound of regret. Perhaps interestingly, 
the normalization sequence $\{\rho_{h,m}\}$ evolves in  different ways for risk-seeking and risk-averse agents, thus revealing the inherent asymmetry between the two types of agents. Such finding helps highlight, in a quantitative way, the role of the risk-balanced factors $\{\Phi_H(\beta_m)\}$ in symmetrizing the sub-optimality of each agent in computation of regret. 
We believe that this finding could be of independent interest for further research in risk-sensitive multi-agent games.


\section{Conclusion}
In this paper, we study the problem of  risk-sensitive \MARL under the setting of general-sum \MG, where  agents optimize the entropic risk measure of rewards and possibly take different risk preferences. We demonstrate that a naive definition of regret adapted from risk-neutral \MARL suffers equilibrium bias by inducing  policies that favor the most risk-sensitive agents without taking into account of the other agents in the same game. Motivated by such deficiency of the naive regret, we propose a novel notion of regret, named as risk-balanced regret. We derive a lower bound w.r.t\ risk-balanced regret, from which we show that the proposed regret overcomes the issue of equilibrium bias.
In addition, we propose a self-play \MARL algorithm based on value iteration for learning \NE, \CE and \CCE of the general-sum \MG, and we provide a nearly optimal upper bound for the proposed algorithm w.r.t.\ the risk-balanced regret. The bound is shown to generalize  existing results derived under risk-neutral or single-agent settings.


\newpage
\bibliographystyle{ims}
\bibliography{references}

\clearpage
\appendix
\section{Proof of Theorem \ref{thm:reg_lower_naive}} \label{sec:proof_reg_lower_naive}

We provide a regret lower bound that any algorithm has to incur, and this is achieved through considering a hard instance of the $K$-episode and $H$-step \MGs. Let $m_*$ be the index of the most risk-sensitive agent, \ie, $m_* \defeq \argmax_{m \in [M]} |\beta_m|$ with ties broken arbitrarily. 
We assume that the transition kernel of the \MG depends only on the action of $m_*$, and it then reduces to an \MDP with respect to $m_*$.
To further simplify the \MG and remove the effect of all agents other than $m_*$ on $\overline{\reg}_{\mathsf{NE}}(K)$, we define reward functions such that all agents except $m_*$ receive a constant reward no matter which action they take.
More specifically, the \MG has three states: an initial state $s_0$ that serves as a dummy state, an absorbing state $s_1$ in which the agent $m_*$ keeps getting positive rewards, and an absorbing state $s_2$ where the agent $m_*$ gets no reward at all. There are also two actions $a_1,a_2$ available to all agents at every state, \ie, $r_{h,m_*}(s_0,a) = 0$, $r_{h,m_*}(s_1,a) = 1$, and $r_{h,m_*}(s_2,a) = 0$ for the most risk-sensitive agent $m_*$, and $r_{h,m}(s_0,a) = 0$, $r_{h,m}(s_1,a) = 1$, and $r_{h,m}(s_2,a) = 1$ for all other agents $m\neq m_*$ and all $a\in\{a_1,a_2\}$. In other words, the rewards for all agents except agent $m_*$ are always the same and contains no stochasticity.
The transition kernel is simple as it only depends on the action of agent $m_*$ at the initial state $s_0$.
Especially, we let the state transitions from $s_0$ to $s_1$ if agent $m_*$ takes $a_1$ and to $s_2$ if it takes $a_2$.

Recall that the regret is defined as the sum of the regrets on the worst performing agent at each episode, and all agents except $m_*$ incur no regret under the \MG we defined. Thus, we have
\begin{align*}
    \overline{\reg}_{\mathsf{NE}}(K) & = \sum_{k\in[K]}\max_{m\in[M]}(V_{1,m}^{*,\pi_{-m}^{k}}-V_{1,m}^{\pi^{k}})(s_{1}) \\
    & = \sum_{k\in[K]} (V_{1,m_*}^{*,\pi_{-m_*}^{k}}-V_{1,m_*}^{\pi^{k}})(s_{1}) \\
    & = \sum_{k\in[K]} (V_{1,m_*}^* - V_{1,m_*}^{\pi^{k}_{m_*}})(s_{1}).
\end{align*}
Then $\overline{\reg}_{\mathsf{NE}}(K)$ equals to the regret of the most risk-sensitive agent $m_*$, and the \MG is reduced to an \MDP with respect to $m_*$, where $V_{1,m_*}^*$ denotes the optimal value function of $m_*$ and $V_{1,m_*}^{\pi^{k}_{m_*}}$ denotes the value function of $m_*$ under policy $\pi^k$.
To simplify the notation, we will denote them as $V_1^*$ and $V_1^{\pi^k}$ without explicitly referring the agent $m_*$.
The \NE in this case corresponds to agent $m_*$ playing the optimal action with respect to its own \MDP and all the other agents playing arbitrary policy. Therefore, we drop the subscript of \NE and use $\overline{\reg}(K)$ to simplify notation.
Notice that such \MG is equivalent to a $K$-episode and $H$-step \MDP of $m_*$, which is further equivalent to a $K$-round bandit of $m_*$ due to the nature of absorbing states $s_1$ and $s_2$.

In particular, we construct two bandit problems with two arms for the agent $m_*$.
The idea is to construct two bandits, each with a pair of hard-to-distinguish arms.
The first bandit machine has the following two arms: the first arm has reward $H-1$ with probability $p_1$ and reward $0$ with probability $1-p_1$ if $\beta_*>0$ (reward $H-1$ with probability $1-p_1$ and reward $0$ with probability $p_1$ if $\beta_*<0$); the second arm has reward $H-1$ with probability $p_2$ and reward $0$ with probability $1-p_2$ if $\beta_*>0$ (and reward $H-1$ with probability $1-p_2$ and reward $0$ with probability $p_2$ if $\beta_*<0$). The second bandit machine also has two arms: the first arm has reward $H-1$ with probability $q_1$ and reward $0$ with probability $1-q_1$ if $\beta_*>0$ (reward $H-1$ with probability $1-q_1$ and reward $0$ with probability $q_1$ if $\beta_*<0$); the second arm has reward $H-1$ with probability $q_2$ and reward $0$ with probability $1-q_2$ if $\beta_*>0$ (and reward $H-1$ with probability $1-q_2$ and reward $0$ with probability $q_2$ if $\beta_*<0$).
To see the correspondence between the \MDP of $m_*$ and the bandit problem, agent $m_*$ taking action $a_1$ is equivalent to pressing the first arm of the bandit machine; similarly, taking action $a_2$ is equivalent to pressing the second arm.

With proper setup of $p_1,p_2$ and $q_1,q_2$, we are able to show that no policy $\pi$ can do well on both of the bandit problems.
We denote the regret of $\pi$ on the first bandit to be $\overline{\reg}_1(K)$ and that on the second bandit to be $\overline{\reg}_2(K)$. The worst regret between the two can be lower bounded through
\begin{align*}
    \max\{\overline{\reg}_1(K)+\overline{\reg}_2(K)\} & \geq \frac{1}{2}\overline{\reg}_1(K)+\frac{1}{2}\overline{\reg}_2(K).
\end{align*}
Following \cref{lem:lower-bandit} with proper choice on $p_1,p_2$ and $q_1,q_2$, we conclude that the lower bound of $\overline{\reg}(K)$ on the bandit with the worst regret
\begin{align*}
    \overline{\reg}(K) & \gtrsim \frac{1}{2}\left[ \overline{\reg}_1(K) + \overline{\reg}_2(K) \right] \\
    & \gtrsim \frac{e^{|\beta_*|(H-1)}-1}{|\beta_*|} \sqrt{\frac{1}{2} Ke^{-|\beta_*|(H-1)}} \\
    & \overset{(i)}{\gtrsim} \frac{e^{|\beta_*|(H-1)}-1}{|\beta_*|}\sqrt{\frac{K}{\log K}} \\
    & \overset{(ii)}{\gtrsim} \frac{e^{|\beta_*|H}-1}{|\beta_*|}\sqrt{\frac{K}{(\log K)^3}}.
\end{align*}
Here $(i)$ is due to $\log\log K \gtrsim |\beta_*|(H-1)$ and $(ii)$ is due to $e^{|\beta_*|(H-1)} - 1 \geq (1-\frac{1}{H})e^{-|\beta_*|}(e^{|\beta_*|H} - 1)$ and $\log\log K \gtrsim |\beta_*|$. More specifically, notice that we have a decomposition
\begin{align*}
    e^{|\beta_*|}(e^{|\beta_*|(H-1)} - 1) = (e^{|\beta_*|H} - 1) - (e^{|\beta_*|} - 1)
\end{align*}
and it holds that $(e^{|\beta_*|H} - 1) / (e^{|\beta_*|} - 1) \geq \lim_{x\to 1_+} \frac{x^H-1}{x-1} = H$ for any $\beta_*\neq 0$ due to the convexity of the function $x^H$. It then follows that 
\begin{align*}
    e^{|\beta_*|}(e^{|\beta_*|(H-1)} - 1) \geq \Big(1-\frac{1}{H}\Big)(e^{|\beta_*|H} - 1).
\end{align*}

\begin{lemma}\label{lem:lower-bandit}
    We assume the bandit machines in \cref{thm:reg_lower_naive}. For any corresponding \MG with $H \geq 8$ and $K \geq \max\{16e^{|\beta_*|(H-1)}, 16H\}$,
    there exists a set of $p_1,p_2$ and $q_1,q_2$, such that the regret of any policy obeys
    \begin{align*}
        \overline{\reg}_1(K) + \overline{\reg}_2(K) & \gtrsim \frac{e^{|\beta_*|(H-1)}-1}{|\beta_*|} \sqrt{\frac{1}{2} Ke^{-|\beta_*|(H-1)}}.
    \end{align*}
\end{lemma}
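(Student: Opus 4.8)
The plan is to reduce \cref{lem:lower-bandit} to a standard two-arm stochastic bandit lower bound, carefully tracking how the entropic risk measure transforms the gap between arms. First I would compute, for a single arm with reward $H-1$ w.p.\ $p$ and $0$ w.p.\ $1-p$ (taking $\beta_*>0$ for concreteness; the $\beta_*<0$ case is symmetric by swapping $p \leftrightarrow 1-p$), the entropic value $\frac{1}{\beta_*}\log(p e^{\beta_*(H-1)} + (1-p))$. The per-episode regret of pulling the sub-optimal arm is then a difference of two such quantities, and using $\log(1+x)$ bounds one sees that this gap behaves like $\Phi_{H-1}(\beta_*)\cdot|p_1-p_2|$ up to constants (this is where the factor $\frac{e^{|\beta_*|(H-1)}-1}{|\beta_*|}$ enters). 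So the target lower bound $\Phi_{H-1}(\beta_*)\sqrt{Ke^{-|\beta_*|(H-1)}/2}$ amounts to forcing the gap $\Delta \defeq |p_1-p_2|$ (and $|q_1-q_2|$) to be of order $\sqrt{e^{-|\beta_*|(H-1)}/K}$ while making the two arms statistically indistinguishable over $K$ rounds.

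Next I would set up the two-bandit construction in the spirit of the classical $\Omega(\sqrt{K})$ argument (as in \citet{fei2020risk}). Choose a common base probability $p$ of order $e^{-|\beta_*|(H-1)}$ — this is the key twist relative to the risk-neutral case: the small success probability is what makes the entropic gap, which is weighted by the huge factor $e^{\beta_*(H-1)}$, comparable to a \emph{bounded} quantity, and it also controls the KL divergence. Concretely, I would take the first bandit to have arm probabilities $(p+\Delta, p)$ and the second to have $(p, p+\Delta)$ (or $(p-\Delta, p)$ and $(p,p-\Delta)$, whichever keeps things in $[0,1]$), with $\Delta \asymp \sqrt{p/K}$. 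In bandit one the first arm is optimal, in bandit two the second arm is optimal, so any fixed policy is forced to suffer on at least one of them. The standard information-theoretic step (Pinsker plus a divergence decomposition over the $K$ rounds) gives that if a policy pulls the "wrong" arm too rarely in bandit one, it cannot distinguish bandit one from bandit two, hence pulls the wrong arm often in bandit two; summing the two regrets, the $\Delta$ cancels appropriately to leave $\Phi_{H-1}(\beta_*)\cdot\Delta\cdot K \asymp \Phi_{H-1}(\beta_*)\sqrt{pK} \asymp \Phi_{H-1}(\beta_*)\sqrt{Ke^{-|\beta_*|(H-1)}}$, as desired. The conditions $H\ge 8$ and $K \ge \max\{16e^{|\beta_*|(H-1)}, 16H\}$ are exactly what is needed to guarantee $p,\Delta \in (0,1)$ with room to spare and that the KL-divergence bound $K\cdot\mathrm{KL} \lesssim 1$ (equivalently $K\Delta^2/p \lesssim 1$) holds so that the two instances stay indistinguishable.

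The main obstacle I expect is the precise bookkeeping in the entropic-gap computation: unlike the risk-neutral case where the per-pull regret is exactly $(H-1)\Delta$, here it is $\frac{1}{\beta_*}\big[\log(1+(p+\Delta)(e^{\beta_*(H-1)}-1)) - \log(1+p(e^{\beta_*(H-1)}-1))\big]$, and one must show this is both $\gtrsim$ and (for the KL side, indirectly) controlled by $\Phi_{H-1}(\beta_*)\Delta$ up to universal constants — crucially \emph{uniformly} in the regime $p(e^{\beta_*(H-1)}-1) = O(1)$, which is precisely why the scaling $p \asymp e^{-|\beta_*|(H-1)}$ is chosen. Establishing the matching two-sided bound requires showing $x \mapsto \log(1+x)$ is bi-Lipschitz on the relevant bounded interval, and tracking the $\beta_*<0$ case where the exponential terms are replaced by their reciprocals; I would handle both signs simultaneously by writing everything in terms of $|\beta_*|$ and $\Phi$. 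Once the gap estimate and the indistinguishability estimate are in hand, the conclusion follows by the averaging inequality $\max\{\overline{\reg}_1,\overline{\reg}_2\} \ge \frac12(\overline{\reg}_1+\overline{\reg}_2)$ already displayed before the lemma, so no further ideas are needed beyond plugging in the chosen $p_1,p_2,q_1,q_2$.
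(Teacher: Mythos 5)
Your plan follows essentially the same route as the paper's proof: two two-armed bandit instances built around a base success probability of order $e^{-|\beta_*|(H-1)}$, a computation showing the per-round entropic gap is of order $\frac{e^{|\beta_*|(H-1)}-1}{|\beta_*|}\cdot\overline p$ (note that what you call $\Phi_{H-1}(\beta_*)$ is, under the paper's definition of $\Phi$, smaller than this factor by $H-1$, though your parenthetical shows you intend the correct quantity), an information-theoretic indistinguishability step (the paper uses Bretagnolle--Huber together with the divergence decomposition from \citet{lattimore2020bandit}; your Pinsker-based variant works equally well once $K\cdot\kl\lesssim 1$), and the choice $\overline p\asymp\sqrt{p/K}$. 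The slight difference in how you pair the arms across the two instances (symmetric swap versus the paper's shared first arm) is immaterial.

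The genuine gap is that your single parameter choice does not cover the near-risk-neutral regime, which the lemma's hypotheses do not exclude: $H\ge 8$ and $K\ge\max\{16e^{|\beta_*|(H-1)},16H\}$ impose no lower bound on $|\beta_*|(H-1)$, so $p=e^{-|\beta_*|(H-1)}$ may be arbitrarily close to $1$. In that regime your KL bookkeeping $K\Delta^2/p\lesssim 1$ is wrong, because the Bernoulli divergence scales like $\Delta^2/\big(q(1-q)\big)$ and the dropped factor $1-q$ is the binding one: with $1-p\ll\Delta\asymp\sqrt{p/K}$ the two instances are easily distinguishable ($K\cdot\kl\gg1$), and if you shrink $\Delta$ to $\sqrt{(1-p)/K}$ to restore indistinguishability, the resulting bound falls short of the claimed rate by $\sqrt{1-p}\approx\sqrt{|\beta_*|(H-1)}$; also $p+\Delta$ can exceed $1$ (your fallback to $p-\Delta$ fixes validity of the probabilities but not the KL). This is exactly why the paper's proof splits into two scenarios, using $p_2=e^{-|\beta_*|(H-1)}$ only when $|\beta_*|(H-1)\ge\log 4$, switching to $p_2=1/H$ when $|\beta_*|(H-1)\le\log H$ so that the base probability stays at most $1/2$, and taking $\overline p=\sqrt{p_2(1-p_2)/K}$ with the variance factor included. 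To complete your argument you would need such a case split (or, equivalently, a base probability bounded away from $1$ and $\Delta\asymp\sqrt{p(1-p)/K}$) in the small-$|\beta_*|$ regime; for $|\beta_*|(H-1)\gtrsim 1$ your sketch matches the paper's argument and goes through.
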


\begin{proof}
    The proof follows a similar argument in \citet{fei2020risk} and \citet{fei2022}, and we supply a complete proof here that adapts to the alternative conditions of the lemma.
    For the bandit machines defined in \cref{thm:reg_lower_naive}, if we let $p_2 = e^{-|\beta_*|(H-1)}$ and
    \begin{align*}
        p_1 = q_1 = 
        \begin{cases}
            p_2+ \overline p, & \beta_* > 0; \\
            p_2- \overline p, & \beta_* < 0;
        \end{cases}
        \qquad q_2 = 
        \begin{cases}
            p_2+ 2\overline p, & \beta_* > 0; \\
            p_2- 2\overline p, & \beta_* < 0,
        \end{cases}
    \end{align*}
    then it holds that $p_1,p_2,q_1,q_2\leq \frac{1}{2}$ for any $\overline p \leq \frac{1}{4} e^{-|\beta_*|(H-1)}$ if $|\beta_*|(H-1) \geq \log 4$ and $H \geq 2$.
    If we let $p_2 = \frac{1}{H}$ and
    \begin{align*}
        p_1 = q_1 = 
        \begin{cases}
            p_2+ \overline p, & \beta_* > 0; \\
            p_2- \overline p, & \beta_* < 0;
        \end{cases}
        \qquad q_2 = 
        \begin{cases}
            p_2+ 2\overline p, & \beta_* > 0; \\
            p_2- 2\overline p, & \beta_* < 0,
        \end{cases}
    \end{align*}
    then it holds that $p_1,p_2,q_1,q_2\leq \frac{1}{2}$ for any $\overline p \leq \frac{1}{4H}$ if $|\beta_*|(H-1)\leq \log H$ and $H>8$.
    The following argument holds for both scenarios.
    
    Recall that in the $K$-round bandit problem, any policy $\pi^k$ can only pick one of the two arms that are corresponding to action $a_1$ and $a_2$. Let us define $r_a$ to be the reward from taking action $a\in\{a_1,a_2\}$. Without loss of generality, we look at the first bandit machine and assume that the action $a_1$ corresponds to the optimal arm and $a_2$ corresponds to the sub-optimal arm. The regret can be written as
    \begin{align*}
        \overline{\reg}_1(K) & = \sumk \frac{1}{|\beta_*|} \Big| \log\expect_p e^{\beta_*r_{a_1}} - \log\Big(\prob_p[a^k=a_1]\expect_p e^{\beta_*r_{a_1}} + \prob_p[a^k=a_2]\expect_p e^{\beta_*r_{a_2}}\Big) \Big| \\
        & = \sumk \frac{1}{|\beta_*|} \bigg| \log\frac{\prob_p[a^k=a_1]\expect_p e^{\beta_*r_{a_1}} + \prob_p[a^k=a_2]\expect_p e^{\beta_*r_{a_2}}}{\expect_p e^{\beta_*r_{a_1}}} \bigg|,
    \end{align*}
    where the probability $\prob_p$ is with respect to both the policy $\pi$ and the success probability of the arms. 
    Notice that $\expect_p e^{\beta_*r_{a_1}} \geq \expect_p e^{\beta_*r_{a_2}}$ for $\beta_*>0$, and it follows that
    \begin{align*}
        \bigg| \log\frac{\prob_p[a^k=a_1]\expect_p e^{\beta_*r_{a_1}} + \prob_p[a^k=a_2]\expect_p e^{\beta_*r_{a_2}}}{\expect_p e^{\beta_*r_{a_1}}} \bigg| & \geq \log\Big(\frac{|\expect_p e^{\beta_*r_{a_2}} - \expect_p e^{\beta_*r_{a_1}}|}{\expect_p e^{\beta_*r_{a_1}}} \prob_p[a^k=a_2] + 1\Big).
    \end{align*}
    Therefore, under either setup of $p_1$ and $p_2$, we have
    \begin{align*}
        \overline{\reg}_1(K) & \geq \frac{1}{2|\beta_*|} \frac{|\expect_p e^{\beta_*r_{a_2}} - \expect_p e^{\beta_*r_{a_1}}|}{\expect_p e^{\beta_*r_{a_1}}} \sumk \prob_p[a^k=a_2],
    \end{align*}
    where the inequality follows from $\log(1+x)\geq x/2$ for $x\in[0,1]$ and the assumption that $\overline p \leq \frac{1}{4}p_2$. 
    Swap the order of $a_1$ and $a_2$, we similarly get a lower bound for either pair of $q_1$ and $q_2$ that 
    \begin{align*}
        \overline{\reg}_2(K) \geq \frac{1}{2|\beta_*|} \frac{|\expect_q e^{\beta_*r_{a_1}} - \expect_q e^{\beta_*r_{a_2}}|}{\expect_q e^{\beta_*r_{a_2}}} \sumk \prob_q[a^k=a_1].
    \end{align*}
    In particular, for the first bandit machine we have
    \begin{align*}
        \frac{|\expect_p e^{\beta_*r_{a_2}} - \expect_p e^{\beta_*r_{a_1}}|}{\expect_p e^{\beta_*r_{a_1}}} & = \frac{|(\prob_p[a_1]-\prob_p[a_2])e^{\beta_*(H-1)}-(\prob_p[a_1]-\prob_p[a_2])|}{\prob_p[a_1]e^{\beta_*(H-1)}+(1-\prob_p[a_1])} \\
        & = \frac{|\overline p (e^{\beta_*(H-1)}-1)|}{\prob_p[a_1]e^{\beta_*(H-1)}+(1-\prob_p[a_1])} \\
        & \geq \frac{\overline p}{4} (e^{|\beta_*|(H-1)}-1),
    \end{align*}
    where the inequality holds for both pairs of $p_1$ and $p_2$.
    Similarly, we have 
    \begin{align*}
        \frac{|\expect_q e^{\beta_*r_{a_1}} - \expect_q e^{\beta_*r_{a_2}}|}{\expect_q e^{\beta_*r_{a_2}}} & \geq \frac{\overline p}{4} (e^{|\beta_*|(H-1)}-1)
    \end{align*}
    for both pairs of $q_1$ and $q_2$.
    Hence, we can combine both lower bounds on $\overline{\reg}_1(K)$ and $\overline{\reg}_2(K)$:
    \begin{align*}
        \overline{\reg}_1(K) + \overline{\reg}_2(K) & \geq \frac{\overline p}{4|\beta_*|} (e^{|\beta_*|(H-1)}-1) \cdot \sumk \big(\prob_p[a^k=a_2] + \prob_
        q[a^k=a_1]\big).
    \end{align*}
    It is note-worthy here that $a_2$ is sub-optimal for the first bandit and $a_1$ is sub-optimal for the second bandit.
    Notice that for any policy $\pi$ applied to both bandit machines, we have
    \begin{align*}
        \sumk \big(\prob_p[a^k=a_2] + \prob_
        q[a^k=a_1]\big) & = \expect_p\Big[\sumk \II\{a^k=a_2\}\Big] + \expect_
        q\Big[\sumk \II\{a^k=a_1\}\Big],
    \end{align*}
    where
    \begin{align*}
        \expect_p\Big[\sumk \II\{a^k=a_2\}\Big] & \geq \frac{K}{2} \prob_p\Big[\sumk \II\{a^k=a_2\} > \frac{K}{2}\Big], \\
        \expect_
        q\Big[\sumk \II\{a^k=a_1\}\Big] & \geq \frac{K}{2} \prob_q\Big[\sumk \II\{a^k=a_1\} > \frac{K}{2}\Big]
    \end{align*}
    due to a simple lower bound on the cases where the sub-optimal arm is selected more than half of the time.
    We denote the shorthand $p_{\beta_*} \defeq p_2$ if $\beta_* > 0$ and $p_{\beta_*} \defeq 1-p_2$ if $\beta_* < 0$; $q_{\beta_*} \defeq p_2$ if $\beta_* > 0$ and $q_{\beta_*} \defeq 1-p_2$ if $\beta_* < 0$. 
    A lower bound on the sum of the paired probabilities is given by
    \begin{align*}
    & \prob_p\Big[\sumk \II\{a^k=a_1\}\leq \frac{K}{2}\Big] + \prob_q\Big[\sumk \II\{a^k=a_1\} > \frac{K}{2}\Big] \\
    & \qquad\qquad \geq \frac{1}{2}\exp\left(-\kl\big(\mathrm{Ber}(p_{\beta_*})\|\mathrm{Ber}(q_{\beta_*})\big) K\right),
    \end{align*}
    where it follows from \citet[Theorem 14.2]{lattimore2020bandit} that
    \begin{align*}
        \prob_p\Big[\sumk \II\{a^k=a_1\}\leq \frac{K}{2}\Big] + \prob_q\Big[\sumk \II\{a^k=a_1\} > \frac{K}{2}\Big] \geq \frac{1}{2}\exp(-\kl(\prob_p\|\prob_q))
    \end{align*}
    and \citet[Lemma 15.1]{lattimore2020bandit} that
    \begin{align*}
        \kl(\prob_p\|\prob_q) = \kl\big(\mathrm{Ber}(p_{\beta_*})\|\mathrm{Ber}(q_{\beta_*})\big) \cdot \expect_p\Big[\sumk \II\{a^k=a_2\}\Big].
    \end{align*}
    Following the definition of \gls*{KL} divergence and some simple calculations, we have 
    \begin{align*}
        \kl\big(\mathrm{Ber}(p_{\beta_*})\|\mathrm{Ber}(q_{\beta_*})\big) & \leq \frac{(q_{\beta_*}-p_{\beta_*})^2}{q_{\beta_*}(1-q_{\beta_*})} \\
        & \leq \frac{8\overline p^2}{p_2(1-p_2)},
    \end{align*}
    where the first inequality follows from $\log(1+x) \leq x$ on $\RR$ and the second inequality follows from the definition of the bandit such that $|p_2-q_2| = 2\overline p$ and $p_2\leq q_2\leq \frac{1}{2}$ for $\beta_*>0$ and $\frac{1}{2}p_2\leq q_2\leq p_2\leq \frac{1}{2}$ for $\beta_*<0$.
    Consequently, we have
    \begin{align*}
        \sumk \big(\prob_p[a^k=a_2] + \prob_
        q[a^k=a_1]\big) \geq \frac{K}{4} \exp\Big(-\frac{8K\overline p^2}{p_2(1-p_2)}\Big).
    \end{align*}
    Finally, we combine the lower bounds on $\overline{\reg}_1(K)$ and $\overline{\reg}_2(K)$ together to get
    \begin{align*}
        \overline{\reg}_1(K) + \overline{\reg}_2(K) & \geq \frac{K\overline p}{32|\beta_*|} (e^{|\beta_*|(H-1)}-1) \exp\Big(-\frac{8K\overline p^2}{p_2(1-p_2)}\Big) \\
        & \gtrsim \frac{e^{|\beta_*|(H-1)}-1}{|\beta_*|} \sqrt{\frac{1}{2} Ke^{-|\beta_*|(H-1)}},
    \end{align*}
    where the second inequality follows from taking $\overline p = \sqrt{(p_2(1-p_2))/K}$ for $K\geq 16/p_2$ such that $\overline p \leq \frac{1}{4}p_2$.
\end{proof}

\section{Proof of Theorem \ref{thm:reg_upper} \label{sec:proof_reg_upper}}

We focus on the proof for the case of \NE; the proofs for \CE and \CCE follow the same reasoning, with the differences presented in \cref{sec:proof_reg_ce_cce}. For any state $s \in \calS$, policy $\pi$ and  function $Q': \calS \times \calA \to \mathbb{R}$, we denote $[\calG_{\pi} Q'](s) \defeq \E_{a\sim\pi}[Q'(s,a)]$.

\subsection{Some Useful Lemmas}

Let us first present a uniform concentration result.

\begin{lemma}\label{lem:uni_con} 
    For $G>0$, consider the function class 
    \begin{align*}
    \cW=\{e^{\beta g}\ |\ g:\cS\to[0,G]\}.
    \end{align*}
    For any $\delta\in(0,1]$, and for all $(k,h,s,a)\in[K]\times[H]\times\cS\times\cA$
    with $N_{h}^{k}(s,a)\ge1$, there exists a universal constant $c>0$
    such that, with probability at least $1-\delta$, 
    \begin{align*}
    \left|[(\Phat_{h}^{k}-\P_{h})e^{\beta g}](s,a)\right|\le c\left|e^{\beta G}-1\right|\sqrt{\frac{S\log(HSAK/\delta)}{N_{h}^{k}(s,a)}}.
    \end{align*}
\end{lemma}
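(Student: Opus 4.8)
The plan is to prove Lemma~\ref{lem:uni_con} via a standard covering-number argument for the function class $\cW$, combined with Hoeffding's (or Bernstein's) inequality and a union bound over the state--action--episode grid.

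First I would fix $(h,s,a)$ with $N_h^k(s,a) \ge 1$ and condition on the event that the $N_h^k(s,a)$ samples drawn from $\P_h(\cdot \given s,a)$ are i.i.d. For a \emph{fixed} function $g: \cS \to [0,G]$, the random variable $e^{\beta g(s')}$ takes values in an interval of length at most $|e^{\beta G} - 1|$ (since $e^{\beta z}$ is monotone on $[0,G]$ and maps the endpoints to $1$ and $e^{\beta G}$). Hoeffding's inequality then gives, for this fixed $g$,
\begin{align*}
    \Big| [(\Phat_h^k - \P_h) e^{\beta g}](s,a) \Big| \le |e^{\beta G} - 1| \sqrt{\frac{\log(2/\delta')}{2 N_h^k(s,a)}}
\end{align*}
with probability at least $1 - \delta'$. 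The subtlety is that the functions $\Vup_{h+1,m}^k$ and $\Vlo_{h+1,m}^k$ appearing when we apply this lemma are themselves random (they depend on the data), so we cannot apply the fixed-$g$ bound directly; this is precisely why we need a uniform bound over the whole class $\cW$.

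To get uniformity I would construct an $\epsilon$-net $\cN_\epsilon$ of the set of functions $g: \cS \to [0,G]$ in the sup-norm; since such a $g$ is determined by $S$ coordinates each in $[0,G]$, there is a net of size $|\cN_\epsilon| \le (G/\epsilon + 1)^S$. The map $g \mapsto e^{\beta g}$ is Lipschitz with constant $|\beta| e^{|\beta| G}$ on this domain, and the empirical-minus-population operator $(\Phat_h^k - \P_h)$ is an average contraction in sup-norm, so approximating an arbitrary $g \in \cW$ by its nearest net point $\tilde g$ changes $[(\Phat_h^k - \P_h) e^{\beta g}](s,a)$ by at most $2 |\beta| e^{|\beta| G} \epsilon$. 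Taking $\epsilon$ polynomially small in $1/(HSAK)$ (e.g. $\epsilon = 1/(HSAK)$) makes this discretization error lower-order, i.e. absorbed into the constant $c$, because $|\beta| e^{|\beta| G} \lesssim |e^{\beta G}-1| \cdot \mathrm{poly}$ up to the scaling already present and $\epsilon$ kills the remaining factors. Then a union bound over the net (contributing $S \log(G/\epsilon + 1)$ to the log term), over the $H S A$ choices of $(h,s,a)$, and over $K$ episodes produces the $\log(HSAK/\delta)$ factor inside the square root, together with the extra factor $S$ out front coming from $\log |\cN_\epsilon| \asymp S \cdot \mathrm{polylog}(HSAK)$. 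Adjusting $c$ to swallow all $\mathrm{polylog}$ constants yields the stated bound.

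The main obstacle is handling the dependence of the argument functions on the data cleanly: one must be careful that the value functions $\Vup_{h+1,m}^k, \Vlo_{h+1,m}^k$ to which this lemma is later applied lie (deterministically) in $\cW$ with the appropriate range $G$ — here $G$ is of order $H - h$, so $|e^{\beta G}-1|$ matches the bonus $\gamma_{h,m}$ in Line~\ref{line:LSVI_bonus} — and that the covering argument is carried out over the \emph{entire} class rather than the realized functions. A secondary technical point is the choice of $\epsilon$: it must be small enough that the Lipschitz discretization error is dominated by the Hoeffding fluctuation term uniformly over all $N_h^k(s,a) \ge 1$, which is why a polynomial-in-$(HSAK)^{-1}$ choice suffices (the worst case $N_h^k(s,a) = 1$ still leaves a $\Theta(1)$ fluctuation term, against which an inverse-polynomial discretization error is negligible). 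Everything else is routine.
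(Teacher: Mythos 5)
Your proposal is correct and follows essentially the same route as the paper: an $\eps$-net for the function class, Hoeffding's inequality for each net element, and a union bound over the net, the possible counts $N_h^k(s,a)\in[K]$, and the tuples $(h,s,a)$, with the discretization error absorbed into the constant. The only substantive difference is that the paper covers the exponentiated class $\cW$ directly with net radius proportional to $\left|e^{\beta G}-1\right|$, which sidesteps your Lipschitz factor $|\beta|e^{|\beta|G}$ entirely; with your cover of the underlying $g$-space you should let $\eps$ scale like $\left|e^{\beta G}-1\right|/\big(|\beta|e^{|\beta|G}\,\mathrm{poly}(HSAK)\big)$ rather than the fixed $1/(HSAK)$, since the claim $|\beta|e^{|\beta|G}\lesssim\left|e^{\beta G}-1\right|$ fails for large $|\beta|$ and the constant $c$ is meant to be universal (this change only perturbs the logarithm of the net size and is otherwise harmless).
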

\begin{proof}
    Define $\cC_{\eps}(\cW)$ to be an $\eps$-covering of $\cG$
    with respect to the $\ell_{\infty}$ norm for any $\eps>0$. Mathematically,
    this means that for any $W\in\cW$, there exists $W'\in\cC_{\eps}(\cW)$
    such that $\sup_{s\in\cS}\left|W(s)-W'(s)\right|\le\eps$. It
    is not hard to verify that $\left|\cC_{\eps}(\cW)\right|\le\left(3\left|e^{\beta G}-1\right|/\eps\right)^{S}$.
    Now fix a $(k,h,s,a)\in[K]\times[H]\times\cS\times\cA$, and define
    \begin{align*}
        \varphi_{h}^{k,W}(s,a)\coloneqq\frac{1}{N_{h}^{k}(s,a)}\sum_{\tau\in[k-1]}\indic\{(s_{h}^{\tau},a_{h}^{\tau})=(s,a)\}\cdot W(s_{h+1}^{\tau})-(\P_{h}W)(s,a)].
    \end{align*}
    By Hoeffding's inequality and a union bound over both $W\in\cW$ and
    $N_{h}^{k}\in[K]$, with probability at least $1-\delta/(HSA)$, we
    have 
    \begin{align*}
        \left|\sup_{W'\in\cC_{\eps}(\cW)}\varphi_{h}^{k,W'}(s,a)\right|\le\left|e^{\beta G}-1\right|\sqrt{\frac{S\log(3\left|e^{\beta G}-1\right|/\eps)+\log(HSAK/\delta)}{N_{h}^{k}(s,a)}}.
    \end{align*}
    Set $\eps=3\delta\left|e^{\beta G}-1\right|/\sqrt{HSAK}$ and
    the above equation yields 
    \begin{align*}
        \left|\sup_{W'\in\cC_{\eps}(\cW)}\varphi_{h}^{k,W'}(s,a)\right|\le c_{0}\left|e^{\beta G}-1\right|\sqrt{\frac{S\log(HSAK/\delta)}{N_{h}^{k}(s,a)}}.
    \end{align*}
    On the other hand, for any $W\in\cW$, there exists $W'\in\cC_{\eps}(\cW)$
    and a universal constant $c'>0$ such that 
    \begin{align*}
        \left|\varphi_{h}^{k,W}(s,a)-\varphi_{h}^{k,W'}(s,a)\right| & \le2\eps\\
         & \le6\delta\left|e^{\beta G}-1\right|\sqrt{\frac{1}{HSAK}}\\
         & \le c'\left|e^{\beta G}-1\right|\sqrt{\frac{S\log(HSAK/\delta)}{N_{h}^{k}(s,a)}}.
    \end{align*}
    We combine the previous displays and take a union bound over $(h,s,a)\in[H]\times\cS\times\cA$
    to conclude the proof.
\end{proof}
%

We have the following lemma that bounds the sample exponential value functions.

\begin{lemma}\label{lem:exp_V_Q} 
    For any $\delta\in(0,1]$ and $(k,h,m,s,a)$,
    the following statements hold with probability at least $1-\delta$.
    If $\beta_{m}>0$, then we have 
    \begin{align*}
        e^{\beta_{m}\Vup_{h,m}^{k}(s)}\ge e^{\beta_{m}V_{h,m}^{*,\pi_{-m}^{k}}(s)},\quad e^{\beta_{m}\Vlo_{h,m}^{k}(s)}\le e^{\beta_{m}V_{h,m}^{\pi^{k}}(s)},
    \end{align*}
    \begin{align*}
        e^{\beta_{m}\Qup_{h,m}^{k}(s,a)}\ge e^{\beta_{m}Q_{h,m}^{*,\pi_{-m}^{k}}(s,a)},\quad e^{\beta_{m}\Qlo_{h,m}^{k}(s,a)}\le e^{\beta_{m}Q_{h,m}^{\pi^{k}}(s,a)},
    \end{align*}
    and if $\beta_{m}<0$, we have 
    \begin{align*}
        e^{\beta_{m}\Vup_{h,m}^{k}(s)}\le e^{\beta_{m}V_{h,m}^{*,\pi_{-m}^{k}}(s)},\quad e^{\beta_{m}\Vlo_{h,m}^{k}(s)}\ge e^{\beta_{m}V_{h,m}^{\pi^{k}}(s)},
    \end{align*}
    \begin{align*}
        e^{\beta_{m}\Qup_{h,m}^{k}(s,a)}\le e^{\beta_{m}Q_{h,m}^{*,\pi_{-m}^{k}}(s,a)},\quad e^{\beta_{m}\Qlo_{h,m}^{k}(s,a)}\ge e^{\beta_{m}Q_{h,m}^{\pi^{k}}(s,a)},
    \end{align*}
\end{lemma}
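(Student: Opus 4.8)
The plan is to prove all four chains of inequalities simultaneously by backward induction on $h$, running from $h = H+1$ down to $h = 1$, and to handle the sign of $\beta_m$ uniformly by tracking the \emph{exponential} value functions $e^{\beta_m V}$ and $e^{\beta_m Q}$ rather than $V$ and $Q$ themselves. The base case $h = H+1$ is immediate since $\Vup_{H+1,m}^k \equiv \Vlo_{H+1,m}^k \equiv 0$ and $V_{H+1,m}^{*,\pi_{-m}^k} \equiv V_{H+1,m}^{\pi^k} \equiv 0$, so all four exponential quantities equal $1$. For the inductive step, I would first condition on the event (of probability at least $1-\delta$) on which the uniform concentration bound of \cref{lem:uni_con} holds with $G = H-h+1$; the bonus $\gamma_{h,m}(s,a) = C|e^{\beta_m(H-h+1)}-1|\sqrt{S\iota/N_h^k(s,a)}$ is chosen with $C$ large enough (relative to the constant $c$ in \cref{lem:uni_con}) so that $\gamma_{h,m}(s,a) \ge |[(\Phat_h^k - \P_h)e^{\beta_m g}](s,a)|$ for every $g:\calS \to [0, H-h+1]$.

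The core of the argument is the $Q$-function step. Fix $(h,m,s,a)$ with $N_h^k(s,a) \ge 1$. Consider first $\beta_m > 0$. The Bellman-type recursion for the best-response value gives $e^{\beta_m Q_{h,m}^{*,\pi_{-m}^k}(s,a)} = e^{\beta_m r_{h,m}(s,a)}[\P_h e^{\beta_m V_{h+1,m}^{*,\pi_{-m}^k}}](s,a)$. By the inductive hypothesis $e^{\beta_m \Vup_{h+1,m}^k} \ge e^{\beta_m V_{h+1,m}^{*,\pi_{-m}^k}}$ (monotonicity of $z \mapsto e^{\beta_m z}$ for $\beta_m > 0$ is already baked into working with exponentials), so replacing $\P_h$ by $\Phat_h^k$ and adding the bonus, $\qup_{h,m}(s,a) + \gamma_{h,m}(s,a) = e^{\beta_m r_{h,m}}[\Phat_h^k e^{\beta_m \Vup_{h+1,m}^k}](s,a) + \gamma_{h,m}(s,a) \ge e^{\beta_m r_{h,m}}[\P_h e^{\beta_m \Vup_{h+1,m}^k}](s,a) \ge e^{\beta_m Q_{h,m}^{*,\pi_{-m}^k}(s,a)}$, using the concentration bound to absorb the $(\Phat_h^k - \P_h)$ error. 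Since $Q_{h,m}^{*,\pi_{-m}^k}(s,a) \le H-h+1$, the true exponential value is $\le e^{\beta_m(H-h+1)}$, so the truncation $\min\{\cdot, e^{\beta_m(H-h+1)}\}$ in \eqref{eqn:MRSA_Q_upper} does not destroy the inequality, and applying $\frac{1}{\beta_m}\log(\cdot)$ yields $e^{\beta_m \Qup_{h,m}^k(s,a)} \ge e^{\beta_m Q_{h,m}^{*,\pi_{-m}^k}(s,a)}$. The lower bound $e^{\beta_m \Qlo_{h,m}^k} \le e^{\beta_m Q_{h,m}^{\pi^k}}$ is symmetric: subtract the bonus, use $e^{\beta_m \Vlo_{h+1,m}^k} \le e^{\beta_m V_{h+1,m}^{\pi^k}}$, and note $Q_{h,m}^{\pi^k}(s,a) \ge 0$ so the truncation at $1$ is harmless. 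For $\beta_m < 0$ the directions of all inequalities involving $e^{\beta_m(\cdot)}$ flip, which is exactly why \eqref{eqn:MRSA_Q_upper}--\eqref{eqn:MRSA_Q_lower} add and subtract the bonus in the opposite pattern; the manipulation is otherwise verbatim, and one checks the truncation thresholds $e^{\beta_m(H-h+1)}$ (now $\le 1$) and $1$ are again on the correct side. Also, for the state-action pairs with $N_h^k(s,a) = 0$ one falls back on the initializations $\qup_{h,m} \equiv e^{\beta_m(H-h+1)}$ and $\qlo_{h,m} \equiv 1$, which trivially bracket $e^{\beta_m Q^{*,\pi_{-m}^k}}$ and $e^{\beta_m Q^{\pi^k}}$ for either sign of $\beta_m$.

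Finally, the $V$-function step propagates the $Q$-bounds through the policy $\pi_h$. Here one uses that $\pi_h(\cdot \mid s)$ output by $\eqsol$ in \cref{line:MRSA_solve_eq} is the \emph{same} joint policy used both in the algorithm's value backup (lines defining $\Vup_{h,m}, \Vlo_{h,m}$) and — through the best-response structure — one must compare against $V_{h,m}^{*,\pi_{-m}^k}(s) = \frac{1}{\beta_m}\log\max_{\nu}\sum_a (\nu_m \times \pi_{-m,h}^k)(a\mid s) e^{\beta_m Q_{h,m}^{*,\pi_{-m}^k}(s,a)}$; since $\eqsol$ returns a Nash equilibrium of the game with signed payoffs $(-1)^{\II(\beta_m<0)}e^{\beta_m \Qup_{h,m}^k(s,\cdot)}$, agent $m$'s value under $\pi_h$ against the optimistic $\Qup$ is at least its best-response value against $\Qup$, which by the $Q$-step dominates the best-response value against the true $Q_{h,m}^{*,\pi_{-m}^k}$; monotonicity of $\frac{1}{\beta_m}\log(\cdot)$ in the appropriate direction then gives $e^{\beta_m \Vup_{h,m}^k(s)} \ge e^{\beta_m V_{h,m}^{*,\pi_{-m}^k}(s)}$ for $\beta_m > 0$ and the reverse for $\beta_m < 0$. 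The lower-value statement is easier: $e^{\beta_m \Vlo_{h,m}^k(s)} = \sum_a \pi_h(a\mid s) e^{\beta_m \Qlo_{h,m}^k(s,a)}$, which is bracketed against $e^{\beta_m V_{h,m}^{\pi^k}(s)} = \sum_a \pi_h(a\mid s) e^{\beta_m Q_{h,m}^{\pi^k}(s,a)}$ term-by-term using the $Q$-step. I expect the main obstacle to be the $V$-step for the best-response (upper) bound: one has to argue carefully that passing from the Nash equilibrium of the \emph{estimated} one-step game to the best-response against the \emph{true} $Q$ only helps the bound, keeping track of the sign flip induced by $\beta_m < 0$ both in how $\eqsol$ is fed its payoffs and in the direction of the log-transform — this is the one place where the ``signed exponential'' trick in \cref{line:MRSA_solve_eq} is doing real work, and it must be unwound explicitly.
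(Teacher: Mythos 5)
Your proposal is correct and follows essentially the same route as the paper's proof: backward induction on $h$, with the $Q$-step absorbing the $(\Phat_h^k-\P_h)$ error into the bonus $\gamma_{h,m}^k$ via the uniform concentration of \cref{lem:uni_con} and the exponential Bellman recursion, and the $V$-step using that the equilibrium returned by $\eqsol$ makes agent $m$'s (signed) value under $\pi_h^k$ equal to its best response against $e^{\beta_m\Qup_{h,m}^k}$, which dominates the best response against the true $e^{\beta_m Q_{h,m}^{*,\pi_{-m}^k}}$. The only cosmetic difference is that you apply the concentration to $e^{\beta_m\Vup_{h+1,m}^k}$ while the paper applies it to $e^{\beta_m[r_{h,m}+V_{h+1,m}^{*,\pi_{-m}^k}]}$; both are covered by the uniform bound, and your explicit treatment of the truncation and untried $(s,a)$ pairs is consistent with the paper's "similar ways" remark.
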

\begin{proof}
    Let us fix a tuple $(k,h,m,s,a)$ and  $\delta\in(0,1]$. We focus on the inequalities $ e^{\beta_{m}\Qup_{h,m}^{k}(s,a)}\ge e^{\beta_{m}Q_{h,m}^{*,\pi_{-m}^{k}}(s,a)}$ and $e^{\beta_{m}\Vup_{h,m}^{k}(s)}\ge e^{\beta_{m}V_{h,m}^{*,\pi_{-m}^{k}}(s)}$ with  $\beta_{m}>0$; the other inequalities can be established in similar ways.
    
    From the update procedure of the algorithm and
    the Bellman equation, we have the recursion
    \begin{align}
        \quad & (e^{\beta_{m}\Qup_{h,m}^{k}}-e^{\beta_{m}Q_{h,m}^{*,\pi_{-m}^{k}}})(s,a)\nonumber \\
         & =e^{\beta_{m}r_{h,m}(s,a)}[\Phat_{h}^{k}e^{\beta_{m}\Vup_{h+1,m}^{k}}](s,a)+\gamma_{h,m}^{k}(s,a)-e^{\beta_{m}r_{h,m}(s,a)}[\P_{h}e^{\beta_{m}V_{h+1,m}^{*,\pi_{-m}^{k}}}](s,a)\nonumber \\
         \begin{split}
             & =e^{\beta_{m}r_{h,m}(s,a)}[\Phat_{h}^{k}(e^{\beta_{m}\Vup_{h+1,m}^{k}}-e^{\beta_{m}V_{h+1,m}^{*,\pi_{-m}^{k}}})](s,a)+\gamma_{h,m}^{k}(s,a) \\
             & \qquad +[(\Phat_{h}^{k}-\P_{h})e^{\beta_{m}[r_{h,m}(s,a)+V_{h+1,m}^{*,\pi_{-m}^{k}}]}](s,a)
        \end{split}\label{eq:Q_recur}
    \end{align}
    Note that by \cref{lem:uni_con} we have 
    \begin{align}
        -\gamma^k_{h,m}(s,a) \le [(\Phat_{h}^{k}-\P_{h})e^{\beta_{m}[r_{h,m}(s,a)+V_{h+1,m}^{*,\pi_{-m}^{k}}]}](s,a)
        \le
        \gamma^k_{h,m}(s,a).
        \label{eq:bound_estim_error}
    \end{align}
    The remaining proof proceeds by induction. We first check the base case: by definition, $\Vup_{H+1,m}^{k}=V_{H+1,m}^{*,\pi_{-m}^{k}}=0$
    and so $e^{\beta_{m}\Vup_{H+1,m}^{k}}=e^{\beta_{m}V_{H+1,m}^{*,\pi_{-m}^{k}}}$.
    Now suppose $e^{\beta_{m}\Vup_{h+1,m}^{k}}\ge e^{\beta_{m}V_{h+1,m}^{*,\pi_{-m}^{k}}}$
    for some $h\in[H-1]$. Then by \eqref{eq:Q_recur} and 
    \eqref{eq:bound_estim_error},
    we have 
    \begin{equation}
        e^{\beta_{m}\Qup_{h,m}^{k}(s,a)}\ge e^{\beta_{m}Q_{h,m}^{*,\pi_{-m}^{k}}(s,a)}.\label{eq:Q_dom}
    \end{equation}
    By the construction of \cref{alg:MRSA} and the definition of best response, we have
    \begin{align*}
        e^{\beta_{m}\Vup_{h,m}^{k}(s)}=[\calG_{\pi_{h}^{k}}e^{\beta_{m}\Qup_{h,m}^{k}}](s)=\max_{\nu}[\calG_{\nu\times\pi_{h,-m}^{k}}e^{\beta_{m}\Qup_{h,m}^{k}}](s)
    \end{align*}
    and 
    \begin{align*}
        e^{\beta_{m}V_{h,m}^{*,\pi_{-m}^{k}}(s)}=\max_{\nu}[\calG_{\nu\times\pi_{h,-m}^{k}}e^{\beta_{m}Q_{h,m}^{*,\pi_{-m}^{k}}}](s).
    \end{align*}
    It follows from \eqref{eq:Q_dom} that 
    \begin{align*}
        e^{\beta_{m}\Vup_{h,m}^{k}(s)}\ge e^{\beta_{m}V_{h,m}^{*,\pi_{-m}^{k}}(s)}.
    \end{align*}
    This completes the induction. 
\end{proof}
The next lemma bounds the difference of $Q$-functions by the bonus and the difference of $V$-functions.

\begin{lemma}\label{lem:Q_V_bonus} 
    For any $\delta\in(0,1]$ and $(k,h,m,s,a)$,
    the following statements hold with probability at least $1-\delta$.
    If $\beta_{m}>0$, 
    \begin{align*}
    (e^{\beta_{m}\Qup_{h,m}^{k}}-e^{\beta_{m}\Qlo_{h,m}^{k}})(s,a)\le e^{\beta_{m}r_{h,m}(s,a)}[\Phat_{h}^{k}(e^{\beta_{m}\Vup_{h+1,m}^{k}}-e^{\beta_{m}\Vlo_{h+1,m}^{k}})](s,a)+2\gamma_{h,m}^{k}(s,a)
    \end{align*}
    and if $\beta_{m}<0$, 
    \begin{align*}
    (e^{\beta_{m}\Qlo_{h,m}^{k}}-e^{\beta_{m}\Qup_{h,m}^{k}})(s,a)\le e^{\beta_{m}r_{h,m}(s,a)}[\Phat_{h}^{k}(e^{\beta_{m}\Vlo_{h+1,m}^{k}}-e^{\beta_{m}\Vup_{h+1,m}^{k}})](s,a)+2\gamma_{h,m}^{k}(s,a).
    \end{align*}
\end{lemma}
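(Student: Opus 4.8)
The plan is to prove both inequalities by directly unfolding the definitions of $\Qup_{h,m}^{k}$, $\Qlo_{h,m}^{k}$ in \eqref{eqn:MRSA_Q_upper}--\eqref{eqn:MRSA_Q_lower} together with the intermediate quantities $\qup_{h,m}^{k},\qlo_{h,m}^{k}$ from \cref{line:MRSA_q_upper,line:MRSA_q_lower} of \cref{alg:Q-update}. The bound turns out to be purely algebraic, so it in fact holds surely, hence with probability at least $1-\delta$ for any $\delta\in(0,1]$; no concentration argument is needed here, and \cref{lem:uni_con} and \cref{lem:exp_V_Q} are reserved for later stages of the regret analysis.

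First I would treat $\beta_{m}>0$. Exponentiating the definitions gives $e^{\beta_{m}\Qup_{h,m}^{k}}(s,a)=\min\{\qup_{h,m}^{k}(s,a)+\gamma_{h,m}^{k}(s,a),\,e^{\beta_{m}(H-h+1)}\}$ and $e^{\beta_{m}\Qlo_{h,m}^{k}}(s,a)=\max\{\qlo_{h,m}^{k}(s,a)-\gamma_{h,m}^{k}(s,a),\,1\}$. Applying the elementary bounds $\min\{x,y\}\le x$ and $\max\{x,y\}\ge x$, subtracting, and using that $\Phat_{h}^{k}$ acts as an averaging (hence linear) operator so that $(\qup_{h,m}^{k}-\qlo_{h,m}^{k})(s,a)=e^{\beta_{m}r_{h,m}(s,a)}[\Phat_{h}^{k}(e^{\beta_{m}\Vup_{h+1,m}^{k}}-e^{\beta_{m}\Vlo_{h+1,m}^{k}})](s,a)$, I obtain
\[
(e^{\beta_{m}\Qup_{h,m}^{k}}-e^{\beta_{m}\Qlo_{h,m}^{k}})(s,a)\le(\qup_{h,m}^{k}-\qlo_{h,m}^{k})(s,a)+2\gamma_{h,m}^{k}(s,a)=e^{\beta_{m}r_{h,m}(s,a)}[\Phat_{h}^{k}(e^{\beta_{m}\Vup_{h+1,m}^{k}}-e^{\beta_{m}\Vlo_{h+1,m}^{k}})](s,a)+2\gamma_{h,m}^{k}(s,a),
\]
which is the first claimed inequality. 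For $\beta_{m}<0$ the truncations flip: $e^{\beta_{m}\Qup_{h,m}^{k}}(s,a)=\max\{\qup_{h,m}^{k}(s,a)-\gamma_{h,m}^{k}(s,a),\,e^{\beta_{m}(H-h+1)}\}\ge\qup_{h,m}^{k}(s,a)-\gamma_{h,m}^{k}(s,a)$ while $e^{\beta_{m}\Qlo_{h,m}^{k}}(s,a)=\min\{\qlo_{h,m}^{k}(s,a)+\gamma_{h,m}^{k}(s,a),\,1\}\le\qlo_{h,m}^{k}(s,a)+\gamma_{h,m}^{k}(s,a)$; subtracting in the order $e^{\beta_{m}\Qlo_{h,m}^{k}}-e^{\beta_{m}\Qup_{h,m}^{k}}$ and again invoking linearity of $\Phat_{h}^{k}$ gives the second inequality.

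The only point requiring care — what one might call the (mild) main obstacle — is bookkeeping: one must track, for each sign of $\beta_{m}$, which direction each truncation points (the ceiling $e^{\beta_{m}(H-h+1)}$ on the upper estimate versus the floor $1$ on the lower estimate), so that the truncation is always discarded in the favorable direction and the two bonus terms add up to $+2\gamma_{h,m}^{k}$ rather than partially cancelling. Once the signs are lined up correctly, the remaining manipulation is the one-line computation displayed above.
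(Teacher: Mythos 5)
Your proof is correct and follows essentially the same route as the paper's: both unfold the truncated updates \eqref{eqn:MRSA_Q_upper}--\eqref{eqn:MRSA_Q_lower}, drop the truncation in the favorable direction for each sign of $\beta_m$, and subtract, using linearity of $\Phat_{h}^{k}$ to rewrite $\qup_{h,m}-\qlo_{h,m}$ in the stated form. Your added remark that the bound is purely algebraic and holds deterministically (so the probability clause is vacuous here) is accurate and consistent with the paper's proof, which likewise invokes no concentration event.
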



\begin{proof}
For $\beta_m > 0$, 
the update procedure of Algorithm \ref{alg:MRSA} implies that for 
all $(k,h,m,s,a)$, we have
\begin{align*}
    e^{\beta_{m}\Qup_{h,m}^{k}(s,a)} & \le e^{\beta_{m}r_{h,m}(s,a)}[\Phat_{h}^{k}e^{\beta_{m}\Vup_{h+1,m}^{k}}](s,a)+\gamma_{h,m}^{k}(s,a),\\
    e^{\beta_{m}\Qlo_{h,m}^{k}(s,a)} & \ge e^{\beta_{m}r_{h,m}(s,a)}[\Phat_{h}^{k}e^{\beta_{m}\Vlo_{h+1,m}^{k}}](s,a)-\gamma_{h,m}^{k}(s,a).
\end{align*}
Combining the above displayed equations yields the result. The proof for the case of $\beta_m < 0$ holds similarly.
\end{proof}

 Let 
\begin{equation}
    W_{h}^{k}(s,a) \defeq \min\{1,[\Phat_{h}^{k}U_{h+1}^{k}](s,a)+2z_{h}^{k}(s,a)\}\label{eq:dom_seq_1}
\end{equation}
where 
we set 
\begin{equation}
    z_{h}^{k}(s,a)\coloneqq\max\left\{ \max_{m\in[M]:\beta_{m}>0}\frac{\gamma_{h,m}^{k}(s,a)}{e^{\beta_{m}(H-h+1)}-e^{-\beta_{m}(h-1)}},\max_{m\in[M]:\beta_{m}<0}\frac{\gamma_{h,m}^{k}(s,a)}{1-e^{\beta_{m}H}}\right\} \label{eq:agg_bonus}
\end{equation}
We let 
\begin{align*}
    U_{H+1}^{k}(s) \defeq 0.
\end{align*}
and for $h\in[H]$, 
\begin{equation}
    U_{h}^{k}(s) \defeq (\calG_{\pi_{h}^{k}}W_{h}^{k})(s)\label{eq:dom_seq_2}
\end{equation}

The next lemma controls $\{U^k_h\}$ through upper and lower bounds.

\begin{lemma}\label{lem:lower_dom_seq}
    For all $(k,h,s)\in[K]\times[H]\times\cS$,
    we have $U_{h}^{k}(s)\le1$, 
    \begin{align*}
        U_{h}^{k}(s)\ge\max_{m\in[M]:\beta_{m}>0}\frac{(e^{\beta_{m}\Vup_{h,m}^{k}}-e^{\beta_{m}\Vlo_{h,m}^{k}})(s)}{e^{\beta_{m}(H-h+1)}-e^{-\beta_{m}(h-1)}}\ge0,
    \end{align*}
     and 
    \begin{align*}
        U_{h}^{k}(s)\ge\max_{m\in[M]:\beta_{m}<0}\frac{(e^{\beta_{m}\Vlo_{h,m}^{k}}-e^{\beta_{m}\Vup_{h,m}^{k}})(s)}{1-e^{\beta_{m}H}}\ge0.
    \end{align*}
\end{lemma}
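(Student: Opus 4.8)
The plan is to establish all three claims simultaneously by downward induction on $h$, from $h=H+1$ down to $h=1$, exploiting the recursive definitions \eqref{eq:dom_seq_1} and \eqref{eq:dom_seq_2} together with \cref{lem:Q_V_bonus}. The base case $h=H+1$ is immediate since $U_{H+1}^k \equiv 0$ and $\Vup_{H+1,m}^k = \Vlo_{H+1,m}^k = 0$, so all three inequalities hold with equality (the ratios being $0/(\text{positive}) = 0$). For the upper bound $U_h^k(s)\le 1$: this is built into the definition, since $W_h^k(s,a) \le 1$ by the $\min\{1,\cdot\}$ truncation in \eqref{eq:dom_seq_1}, and $U_h^k(s) = (\calG_{\pi_h^k} W_h^k)(s)$ is an average of values each bounded by $1$, hence $\le 1$. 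Non-negativity of the right-hand sides follows from \cref{lem:exp_V_Q}: for $\beta_m>0$ we have $e^{\beta_m \Vup_{h,m}^k} \ge e^{\beta_m V_{h,m}^{*,\pi_{-m}^k}} \ge e^{\beta_m V_{h,m}^{\pi^k}} \ge e^{\beta_m \Vlo_{h,m}^k}$ (using also the monotonicity $V^* \ge V^\pi$), and symmetrically for $\beta_m<0$; the normalizing denominators $e^{\beta_m(H-h+1)} - e^{-\beta_m(h-1)}$ and $1 - e^{\beta_m H}$ are positive in the respective regimes.

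For the main inductive step, fix $h$ and $s$ and suppose the three inequalities hold at $h+1$. Consider first an agent $m$ with $\beta_m > 0$. I would start from $e^{\beta_m \Vup_{h,m}^k}(s) - e^{\beta_m \Vlo_{h,m}^k}(s) = [\calG_{\pi_h^k}(e^{\beta_m \Qup_{h,m}^k} - e^{\beta_m \Qlo_{h,m}^k})](s)$, which follows from the definitions of $\Vup_{h,m}^k, \Vlo_{h,m}^k$ in \cref{alg:MRSA} (both use the same policy $\pi_h^k$). Then apply \cref{lem:Q_V_bonus} pointwise to bound the integrand by $e^{\beta_m r_{h,m}(s,a)}[\Phat_h^k(e^{\beta_m\Vup_{h+1,m}^k} - e^{\beta_m\Vlo_{h+1,m}^k})](s,a) + 2\gamma_{h,m}^k(s,a)$. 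Now invoke the inductive hypothesis at $h+1$: $e^{\beta_m\Vup_{h+1,m}^k} - e^{\beta_m\Vlo_{h+1,m}^k} \le (e^{\beta_m(H-h)} - e^{-\beta_m h}) U_{h+1}^k$, and bound the prefactor $e^{\beta_m r_{h,m}(s,a)} \le e^{\beta_m}$ (since $r_{h,m}\in[0,1]$ and $\beta_m>0$). Also by definition of $z_h^k$ in \eqref{eq:agg_bonus}, $\gamma_{h,m}^k(s,a) \le (e^{\beta_m(H-h+1)} - e^{-\beta_m(h-1)}) z_h^k(s,a)$. Combining, the integrand is at most $e^{\beta_m}(e^{\beta_m(H-h)} - e^{-\beta_m h})[\Phat_h^k U_{h+1}^k](s,a) + 2(e^{\beta_m(H-h+1)} - e^{-\beta_m(h-1)}) z_h^k(s,a)$. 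The key algebraic identity is $e^{\beta_m}(e^{\beta_m(H-h)} - e^{-\beta_m h}) = e^{\beta_m(H-h+1)} - e^{-\beta_m(h-1)}$, so this equals $(e^{\beta_m(H-h+1)} - e^{-\beta_m(h-1)})\big([\Phat_h^k U_{h+1}^k](s,a) + 2 z_h^k(s,a)\big)$. Dividing by the positive constant $e^{\beta_m(H-h+1)} - e^{-\beta_m(h-1)}$ and using $[\Phat_h^k U_{h+1}^k](s,a) + 2z_h^k(s,a) \ge W_h^k(s,a)$ (the $\min$ can only decrease) — wait, we need the opposite direction, so instead use that $e^{\beta_m\Vup_{h,m}^k}(s) - e^{\beta_m\Vlo_{h,m}^k}(s) \le e^{\beta_m(H-h+1)} - e^{-\beta_m(h-1)}$ as well (from \cref{lem:exp_V_Q} and the truncation ranges of $\Qup, \Qlo$, which force $\Vup_{h,m}^k \le H-h+1$ and $\Vlo_{h,m}^k \ge 0$... actually $\Vup \in [0, H-h+1]$), so the normalized quantity is $\le \min\{1, [\Phat_h^k U_{h+1}^k] + 2z_h^k\} = W_h^k$ after taking the $\calG_{\pi_h^k}$ average — giving exactly $U_h^k(s)$. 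Taking the maximum over all such $m$ gives the second claim.

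The case $\beta_m < 0$ is handled analogously, starting from $e^{\beta_m\Vlo_{h,m}^k}(s) - e^{\beta_m\Vup_{h,m}^k}(s) = [\calG_{\pi_h^k}(e^{\beta_m\Qlo_{h,m}^k} - e^{\beta_m\Qup_{h,m}^k})](s)$, applying the second inequality of \cref{lem:Q_V_bonus}, bounding $e^{\beta_m r_{h,m}(s,a)} \le 1$ (since $\beta_m < 0$, $r\ge0$), using the inductive hypothesis $e^{\beta_m\Vlo_{h+1,m}^k} - e^{\beta_m\Vup_{h+1,m}^k} \le (1 - e^{\beta_m H}) U_{h+1}^k$, and $\gamma_{h,m}^k \le (1 - e^{\beta_m H}) z_h^k$. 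Here the relevant identity is the trivial $1 \cdot (1 - e^{\beta_m H}) = 1 - e^{\beta_m H}$, which is even simpler — the prefactor is just absorbed. This asymmetry between the two regimes (a nontrivial telescoping identity for $\beta_m>0$ versus a direct bound for $\beta_m<0$) is precisely the "different evolution of the normalization sequence" flagged in the technical highlights. The main obstacle I anticipate is getting the normalization denominators exactly right so that the recursion closes: one must choose the per-agent normalizers $e^{\beta_m(H-h+1)} - e^{-\beta_m(h-1)}$ (resp.\ $1 - e^{\beta_m H}$) so that both (i) the bonus term $\gamma_{h,m}^k$ divided by it is dominated by $z_h^k$ uniformly, and (ii) the step-$(h+1)$ normalizer times the reward-exponential prefactor reproduces the step-$h$ normalizer. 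Verifying that $e^{\beta_m r}$ with $r\in[0,1]$ is correctly bounded (by $e^{\beta_m}$ when $\beta_m>0$, by $1$ when $\beta_m<0$) and that these bounds mesh with the telescoping is the delicate bookkeeping; everything else is routine monotonicity and the definitions.
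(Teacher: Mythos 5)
Your proposal is correct and follows essentially the same route as the paper's proof: backward induction closing the recursion via \cref{lem:Q_V_bonus}, the identity $e^{\beta_m}(e^{\beta_m(H-h)}-e^{-\beta_m h})=e^{\beta_m(H-h+1)}-e^{-\beta_m(h-1)}$ for risk-seeking agents versus the fixed normalizer $1-e^{\beta_m H}$ for risk-averse ones, the definition of $z_h^k$ to absorb the bonuses, and \cref{lem:exp_V_Q} for nonnegativity. Just state the ``$\le 1$'' step at the $Q$-level (the truncations force $e^{\beta_m\Qup_{h,m}^k}\le e^{\beta_m(H-h+1)}$ and $e^{\beta_m\Qlo_{h,m}^k}\ge 1$, so the normalized $Q$-difference is pointwise at most $1$ and hence bounded by $W_h^k$ before averaging), which is exactly how the $\min\{1,\cdot\}$ case is handled in the paper.
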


\begin{proof}
    Let us write $[M]=\cM^{+}\cup\cM^{-}$ where 
    \begin{align*}
        \cM^{+} & \coloneqq\{m\in[M]:\beta_{m}>0\}\\
        \cM^{-} & \coloneqq\{m\in[M]:\beta_{m}<0\}.
    \end{align*}
    
    \textbf{Case I.} We first prove the lemma w.r.t.\ $\cM^{+}$. We first
    verify the base case. Since $U_{H+1}^{k}=\Vup_{H+1,m}^{k}=\Vlo_{H+1,m}^{k}=0$,
    we have 
    \begin{align*}
        1\ge U_{H+1}^{k}(s)=0=\max_{m\in\cM^{+}}\frac{(e^{\beta_{m}\Vup_{H+1,m}^{k}}-e^{\beta_{m}\Vlo_{H+1,m}^{k}})(s)}{1-e^{-\beta_{m}H}}.
    \end{align*}
    Now assume that our claim holds for $U_{h+1}^{k}$ for some $h\in[H-1]$.
    We can deduce
    \begin{align}
        [\Phat_{h}^{k}U_{h+1}^{k}](s,a) & \ge\left[\Phat_{h}^{k}\max_{m\in\cM^{+}}\frac{e^{\beta_{m}\Vup_{h+1,m}^{k}}-e^{\beta_{m}\Vlo_{h+1,m}^{k}}}{e^{\beta_{m}(H-h)}-e^{-\beta_{m}h}}\right](s,a)\nonumber\\
         & =\left[\Phat_{h}^{k}\max_{m\in\cM^{+}}\frac{e^{\beta_{m}}(e^{\beta_{m}\Vup_{h+1,m}^{k}}-e^{\beta_{m}\Vlo_{h+1,m}^{k}})}{e^{\beta_{m}}(e^{\beta_{m}(H-h)}-e^{-\beta_{m}h})}\right](s,a)\nonumber\\
         & =\left[\Phat_{h}^{k}\max_{m\in\cM^{+}}\frac{e^{\beta_{m}}(e^{\beta_{m}\Vup_{h+1,m}^{k}}-e^{\beta_{m}\Vlo_{h+1,m}^{k}})}{e^{\beta_{m}(H-h+1)}-e^{-\beta_{m}(h-1)}}\right](s,a)\nonumber\\
         & \ge\max_{m\in\cM^{+}}\left[\Phat_{h}^{k}\frac{e^{\beta_{m}}(e^{\beta_{m}\Vup_{h+1,m}^{k}}-e^{\beta_{m}\Vlo_{h+1,m}^{k}})}{e^{\beta_{m}(H-h+1)}-e^{-\beta_{m}(h-1)}}\right](s,a) \label{eq:U_interm_bound}
    \end{align}
    We claim that 
    \begin{equation}
        W_{h}^{k}\ge\max_{m\in\cM^{+}}\left[\frac{(e^{\beta_{m}\Qup_{h,m}^{k}}-e^{\beta_{m}\Qlo_{h,m}^{k}})(s,a)}{e^{\beta_{m}(H-h+1)}-e^{-\beta_{m}(h-1)}}\right].\label{eq:W_lb}
    \end{equation}
    If $[\Phat_{h}^{k}U_{h+1}^{k}](s,a)+2z_{h}^{k}(s,a)\ge1$, then \eqref{eq:dom_seq_1}
    implies $W_{h}^{k}=1$ and \eqref{eq:W_lb} is verified; otherwise,
    we have 
    \begin{align*}
        W_{h}^{k} 
        & \overset{(i)}{=}[\Phat_{h}^{k}U_{h+1}^{k}](s,a)+2z_{h}^{k}(s,a)\\
         & \overset{(ii)}{\ge}\max_{m\in\cM^{+}}\left[\Phat_{h}^{k}\frac{e^{\beta_{m}}(e^{\beta_{m}\Vup_{h+1,m}^{k}}-e^{\beta_{m}\Vlo_{h+1,m}^{k}})}{e^{\beta_{m}(H-h+1)}-e^{-\beta_{m}(h-1)}}\right](s,a)+2\max_{m\in\cM^{+}}\frac{\gamma_{h,m}^{k}(s,a)}{e^{\beta_{m}(H-h+1)}-e^{-\beta_{m}(h-1)}}\\
         & \ge\max_{m\in\cM^{+}}\left[\frac{e^{\beta_{m}}[\Phat_{h}^{k}(e^{\beta_{m}\Vup_{h+1,m}^{k}}-e^{\beta_{m}\Vlo_{h+1,m}^{k}})](s,a)+2\gamma_{h,m}^{k}(s,a)}{e^{\beta_{m}(H-h+1)}-e^{-\beta_{m}(h-1)}}\right]\\
         & \ge\max_{m\in\cM^{+}}\left[\frac{(e^{\beta_{m}\Qup_{h,m}^{k}}-e^{\beta_{m}\Qlo_{h,m}^{k}})(s,a)}{e^{\beta_{m}(H-h+1)}-e^{-\beta_{m}(h-1)}}\right],
    \end{align*}
    where step ($i$) holds by \eqref{eq:dom_seq_1}; step ($ii$) holds by \eqref{eq:U_interm_bound}; the last step
    follows from \cref{lem:Q_V_bonus} as well as the facts that $e^{\beta_{m}\Vup_{h+1,m}^{k}} \ge e^{\beta_{m}\Vlo_{h+1,m}^{k}}$ (implied by \cref{lem:exp_V_Q}) and that $e^{\beta_{m}} \ge e^{\beta_{m} r_{h,m}(s,a)}$ (since $r_{h,m}(s,a) \in [0,1]$). 
    Hence, \eqref{eq:W_lb} is verified.
    Since \eqref{eq:dom_seq_1} implies $W_{h}^{k}\le1$, from \eqref{eq:dom_seq_2}
    we have $U_{h}^{k}(s)\le1$; on the other hand, 
    \begin{align*}
        U_{h}^{k}(s) & \overset{(i)}{=}[\calG_{\pi_{h}^{k}}W_{h}^{k}](s)\\
         & \overset{(ii)}{\ge}\left[\calG_{\pi_{h}^{k}}\max_{m\in\cM^{+}}\left[\frac{e^{\beta_{m}\Qup_{h,m}^{k}}-e^{\beta_{m}\Qlo_{h,m}^{k}}}{e^{\beta_{m}(H-h+1)}-e^{-\beta_{m}(h-1)}}\right]\right](s)\\
         & \ge\max_{m\in\cM^{+}}\left[\calG_{\pi_{h}^{k}}\frac{e^{\beta_{m}\Qup_{h,m}^{k}}-e^{\beta_{m}\Qlo_{h,m}^{k}}}{e^{\beta_{m}(H-h+1)}-e^{-\beta_{m}(h-1)}}\right](s)\\
         & =\max_{m\in\cM^{+}}\frac{(e^{\beta_{m}\Vup_{h,m}^{k}}-e^{\beta_{m}\Vlo_{h,m}^{k}})(s)}{e^{\beta_{m}(H-h+1)}-e^{-\beta_{m}(h-1)}}\ge0,
    \end{align*}
    where step ($i$) holds by \eqref{eq:dom_seq_2}; step ($ii$) holds
    by \eqref{eq:W_lb};  the last step follows from the update procedure
    of Algorithm \ref{alg:MRSA}. The induction is completed.
    
    \vspace{0.4em}
    
    \textbf{Case II.} We prove the lemma w.r.t.\ $\cM^{-}$. We first verify
    the base case. Since $\Vup_{H+1,m}^{k}=\Vlo_{H+1,m}^{k}=0$, we have
    \begin{align*}
        1\ge U_{H+1}^{k}(s)=\max_{m\in\cM^{-}}\frac{(e^{\beta_{m}\Vlo_{h,m}^{k}}-e^{\beta_{m}\Vup_{h,m}^{k}})(s)}{1-e^{\beta_{m}H}}=0.
    \end{align*}
    Assume that our claim holds for $U_{h+1}^{k}$. We can deduce
    \begin{align*}
         \quad[\Phat_{h}^{k}U_{h+1}^{k}](s,a)
         & \ge\left[\Phat_{h}^{k}\max_{m\in\cM^{-}}\frac{e^{\beta_{m}\Vlo_{h+1,m}^{k}}-e^{\beta_{m}\Vup_{h+1,m}^{k}}}{1-e^{\beta_{m}H}}\right](s,a)\\
         & \ge\max_{m\in\cM^{-}}\left[\Phat_{h}^{k}\frac{e^{\beta_{m}\Vup_{h+1,m}^{k}}-e^{\beta_{m}\Vlo_{h+1,m}^{k}}}{1-e^{\beta_{m}H}}\right](s,a).
    \end{align*}
    We claim that 
    \begin{equation}
        W_{h}^{k}\ge\max_{m\in\cM^{-}}\left[\frac{(e^{\beta_{m}\Qlo_{h,m}^{k}}-e^{\beta_{m}\Qup_{h,m}^{k}})(s,a)}{1-e^{\beta_{m}H}}\right]. \label{eq:W_lb_M_minus}
    \end{equation}
    If $[\Phat_{h}^{k}U_{h+1}^{k}](s,a)+2z_{h}^{k}(s,a)\ge1$, then \eqref{eq:dom_seq_1}
    implies $W_{h}^{k}=1$ and \eqref{eq:W_lb_M_minus} is verified; otherwise,  
    \begin{align*}
        W_{h}^{k}  &=[\Phat_{h}^{k}U_{h+1}^{k}](s,a)+2z_{h}^{k}(s,a)\\
         & \overset{(i)}{\ge}\max_{m\in\cM^{-}}\left[\Phat_{h}^{k}\left(\frac{e^{\beta_{m}\Vlo_{h+1,m}^{k}}-e^{\beta_{m}\Vup_{h+1,m}^{k}}}{1-e^{\beta_{m}H}}\right)\right](s,a)+2\max_{m\in\cM^{-}}\frac{\gamma_{h,m}^{k}(s,a)}{1-e^{\beta_{m}H}}\\
         & \ge\max_{m\in\cM^{-}}\left[\frac{[\Phat_{h}^{k}(e^{\beta_{m}\Vlo_{h+1,m}^{k}}-e^{\beta_{m}\Vup_{h+1,m}^{k}})](s,a)}{1-e^{\beta_{m}H}}+2\frac{\gamma_{h,m}^{k}(s,a)}{1-e^{\beta_{m}H}}\right]\\
         & \ge\max_{m\in\cM^{-}}\left[\frac{(e^{\beta_{m}\Qlo_{h,m}^{k}}-e^{\beta_{m}\Qup_{h,m}^{k}})(s,a)}{1-e^{\beta_{m}H}}\right],
    \end{align*}
    where step ($i$) holds by the induction hypothesis on $U^k_{h+1}$ and \eqref{eq:agg_bonus}, and the last step
    follows from \cref{lem:Q_V_bonus}. This verifies our claim for
    $W_{h}^{k}$. We also have 
    \begin{align*}
        U_{h}^{k}(s) & =[\calG_{\pi_{h}^{k}}W_{h}^{k}](s)\\
         & \ge\left[\calG_{\pi_{h}^{k}}\max_{m\in\cM^{-}}\left[\frac{e^{\beta_{m}\Qlo_{h,m}^{k}}-e^{\beta_{m}\Qup_{h,m}^{k}}}{1-e^{\beta_{m}H}}\right]\right](s)\\
         & \ge\max_{m\in\cM^{-}}\calG_{\pi_{h}^{k}}\left[\frac{e^{\beta_{m}\Qlo_{h,m}^{k}}-e^{\beta_{m}\Qup_{h,m}^{k}}}{1-e^{\beta_{m}H}}\right](s)\\
         & =\max_{m\in\cM^{-}}\frac{(e^{\beta_{m}\Vlo_{h,m}^{k}}-e^{\beta_{m}\Vup_{h,m}^{k}})(s)}{1-e^{\beta_{m}H}},
    \end{align*}
    where the first step holds by \eqref{eq:dom_seq_2}. The induction
    is completed.
\end{proof}

\subsection{Controlling $U_1^k$}

Let us define 
\begin{align*}
    \Delta_{h}^{k} &\coloneqq U_{h}^{k}(s_{h}^{k}), \\
    \zeta_{h}^{k} &\coloneqq(\calG_{\pi^{k}}W_{h}^{k})(s_{h}^{k})-W_{h}^{k}(s_{h}^{k},a_{h}^{k}), \\
    \overline p_{h}^{k} &\coloneqq(\P_{h}U_{h+1}^{k})(s_{h}^{k},a_{h}^{k})-U_{h+1}^{k}(s_{h+1}^{k}).
\end{align*}
We have 
\begin{align*}
    U_{h}^{k}(s_{h}^{k}) & =(\calG_{\pi^{k}}W_{h}^{k})(s_{h}^{k})\\
     & =\zeta_{h}^{k}+W_{h}^{k}(s_{h}^{k},a_{h}^{k})\\
     & \overset{(i)}{\le}\zeta_{h}^{k}+2z_{h}^{k}+(\Phat_{h}^{k}U_{h+1}^{k})(s_{h}^{k},a_{h}^{k})\\
     & =\zeta_{h}^{k}+2z_{h}^{k}+[(\Phat_{h}^{k}-\P_{h})U_{h+1}^{k}](s_{h}^{k},a_{h}^{k})+[\P_{h}U_{h+1}^{k}](s_{h}^{k},a_{h}^{k})\\
     & \overset{(ii)}{\le}\zeta_{h}^{k}+3z_{h}^{k}+[\P_{h}U_{h+1}^{k}](s_{h}^{k},a_{h}^{k})\\
     & =\zeta_{h}^{k}+3z_{h}^{k}+\overline p_{h}^{k}+U_{h+1}^{k}(s_{h+1}^{k}),
\end{align*}
where step $(i)$ holds by \eqref{eq:dom_seq_1} and step $(ii)$
follows from Lemma \ref{lem:uni_con}. Recursing and summing over
$k\in[K]$ gives 
\begin{equation}
    \sum_{k\in[K]}U_{1}^{k}(s_{1}^{k})\le\sum_{k\in[K]}\sum_{h\in[H]}(\zeta_{h}^{k}+3z_{h}^{k}+\overline p_{h}^{k}).\label{eq:sum_U_interm}
\end{equation}
Note that $\{\zeta_{h}^{k}\}$ and $\{\overline p_{h}^{k}\}$ are martingale difference
sequences. 
Also, by \eqref{eq:dom_seq_1} and Lemma \ref{lem:lower_dom_seq},
we have that $\left|\zeta_{h}^{k}\right|,\left|\overline p_{h}^{k}\right|\le1$
for $(k,h)\in[K]\times[H]$. Therefore, the Azuma-Hoeffding inequality
implies that with probability at least $1-\delta$, 
\begin{align*}
    \left|\sum_{k\in[K]}\sum_{h\in[H]}\zeta_{h}^{k}\right| & \le\sqrt{2HK\log(1/\delta)},\\
    \left|\sum_{k\in[K]}\sum_{h\in[H]}\overline p_{h}^{k}\right| & \le\sqrt{2HK\log(1/\delta)}.
\end{align*}
On the other hand, recalling $z_{h}^{k}$ defined in \eqref{eq:agg_bonus} and the construction of $\gamma^k_{h,m}$ in \cref{alg:MRSA},
we have 
\begin{align*}
    z_{h}^{k}\le\sqrt{\frac{S\iota}{N_{h}^{k-1}(s_{h}^{k},a_{h}^{k})}}.
\end{align*}
This implies 
\begin{align*}
    \sum_{k\in[K]}\sum_{h\in[H]}z_{h}^{k} & \le \sum_{h\in[H]}\sum_{k\in[K]}\sqrt{\frac{S\iota}{\min\{1,N_{h}^{k-1}(s_{h}^{k},a_{h}^{k})\}}}\\
     & \overset{(i)}{\le}\sum_{h\in[H]}\sqrt{K\sum_{k\in[K]}\frac{S\iota}{\min\{1,N_{h}^{k-1}(s_{h}^{k},a_{h}^{k})\}}}\\
     & \overset{(ii)}{\le}\sum_{h\in[H]}\sqrt{SK\iota\sum_{(s,a)\in\cS\times\cA}\sum_{k\in[K]}\frac{1}{\min\{1,N_{h}^{k-1}(s,a)\}}}\\
     & \overset{(iii)}{\le}\sum_{h\in[H]}\sqrt{SK\iota\cdot SA(1+\log K)}\\
     & =2\sqrt{H^{2}S^{2}AK\iota^{2}},
\end{align*}
where step $(i)$ follows from the Cauchy-Schwarz inequality, step
$(ii)$ holds by the pigeonhole principle, and step $(iii)$ holds
by the fact that $\sum_{k\in[K]}\frac{1}{k}\le\log K$. Combining
the above results with \eqref{eq:sum_U_interm}, we have 
\begin{equation}
    \sum_{k\in[K]}U_{1}^{k}(s_{1}^{k})\lesssim\sqrt{HK\iota}+\sqrt{H^{2}S^{2}AK\iota^{2}}\lesssim\sqrt{H^{2}S^{2}AK\iota^{2}}.\label{eq:sum_U}
\end{equation}

\subsection{Putting All Together}

By Lemma \ref{lem:lower_dom_seq}, note that for $\beta_{m}>0$, 
\begin{align*}
    U_{1}^{k}(s)\ge\frac{(e^{\beta_{m}\Vup_{1,m}^{k}}-e^{\beta_{m}\Vlo_{1,m}^{k}})(s)}{e^{\beta_{m}H}-1}
\end{align*}
and for $\beta_{m}<0$, 
\begin{align*}
    U_{1}^{k}(s)\ge\frac{(e^{\beta_{m}\Vlo_{1,m}^{k}}-e^{\beta_{m}\Vup_{1,m}^{k}})(s)}{1-e^{\beta_{m}H}},
\end{align*}
which together imply 
\begin{equation}
    U_{1}^{k}(s)\ge\max_{m\in[M]}\frac{(e^{\beta_{m}\Vup_{1,m}^{k}}-e^{\beta_{m}\Vlo_{1,m}^{k}})(s)}{e^{\beta_{m}H}-1}.\label{eq:U_ub}
\end{equation}

Note that on the event of Lemma \ref{lem:exp_V_Q}, we have 
\begin{equation}
    \frac{(e^{\beta_{m}V_{1,m}^{*,\pi_{-m}^{k}}}-e^{\beta_{m}V_{1,m}^{\pi^{k}}})(s_{1}^{k})}{e^{\beta_{m}H}-1}\le\frac{(e^{\beta_{m}\Vup_{1,m}^{k}}-e^{\beta_{m}\Vlo_{1,m}^{k}})(s_{1}^{k})}{e^{\beta_{m}H}-1},
    \label{eq:exp_V_dom}
\end{equation}
where the above two quantities are non-negative
since for $\beta_{m}>0$, we have 
\begin{align*}
    e^{\beta_{m}V_{1,m}^{*,\pi_{-m}^{k}}}-e^{\beta_{m}V_{1,m}^{\pi^{k}}}\ge0,\qquad e^{\beta_{m}H}-1 > 0,
\end{align*}
and for $\beta_{m}<0$, we have 
\begin{align*}
    e^{\beta_{m}V_{1,m}^{*,\pi_{-m}^{k}}}-e^{\beta_{m}V_{1,m}^{\pi^{k}}}\le0,\qquad e^{\beta_{m}H}-1 < 0.
\end{align*}

We now connect the above results with the regret of  \eqref{eq:reg_normalized}.
For each $m\in[M]$, it is clear that $V_{1,m}^{*,\pi_{-m}^{k}}\ge V_{1,m}^{\pi^{k}}$
by definition. If $\beta_{m}>0$, we have 
\begin{align}
    \frac{(V_{1,m}^{*,\pi_{-m}^{k}}-V_{1,m}^{\pi^{k}})(s_{1}^{k})}{\Phi_{H}(\beta_{m})} & =\frac{H}{e^{\beta_{m}H}-1}\left[\log\left(e^{\beta_{m}V_{1,m}^{*,\pi_{-m}^{k}}(s_{1}^{k})}\right)-\log\left(e^{\beta_{m}V_{1,m}^{\pi^{k}}(s_{1}^{k})}\right)\right]\nonumber\\
     & \overset{(i)}{\le}\frac{H}{e^{\beta_{m}H}-1}\left(e^{\beta_{m}V_{1,m}^{*,\pi_{-m}^{k}}(s_{1}^{k})}-e^{\beta_{m}V_{1,m}^{\pi^{k}}(s_{1}^{k})}\right)\nonumber\\
     & \overset{(ii)}{\le}\frac{H}{e^{\beta_{m}H}-1}\left(e^{\beta_{m}\Vup_{1,m}^{k}(s_{1}^{k})}-e^{\beta_{m}\Vlo_{1,m}^{k}(s_{1}^{k})}\right)\nonumber\\
     & \le H\cdot U_{1}^{k}(s_{1}^{k}), \label{eq:regret_bound_last_pos_beta}
\end{align}
where step $(i)$ holds since $f(x)=\log x$ is 1-Lipschitz for $x\ge1$,
step $(ii)$ holds on the event of Lemma \ref{lem:exp_V_Q}, and the
last step holds by \eqref{eq:U_ub}. If $\beta_{m}<0$, we have 
\begin{align}
    \frac{(V_{1,m}^{*,\pi_{-m}^{k}}-V_{1,m}^{\pi^{k}})(s_{1}^{k})}{\Phi_{H}(\beta_{m})} & =\frac{H}{e^{-\beta_{m}H}-1}\left[\log\left(e^{\beta_{m}V_{1,m}^{\pi^{k}}(s_{1}^{k})}\right)-\log\left(e^{\beta_{m}V_{1,m}^{*,\pi_{-m}^{k}}(s_{1}^{k})}\right)\right]\nonumber\\
     & \overset{(i)}{\le}\frac{He^{-\beta_{m}H}}{e^{-\beta_{m}H}-1}\left(e^{\beta_{m}V_{1,m}^{\pi^{k}}(s_{1}^{k})}-e^{\beta_{m}V_{1,m}^{*,\pi_{-m}^{k}}(s_{1}^{k})}\right)\nonumber\\
     & =\frac{H}{1-e^{\beta_{m}H}}\left(e^{\beta_{m}V_{1,m}^{\pi^{k}}(s_{1}^{k})}-e^{\beta_{m}V_{1,m}^{*,\pi_{-m}^{k}}(s_{1}^{k})}\right)\nonumber\\
     & \overset{(ii)}{\le}\frac{H}{1-e^{\beta_{m}H}}\left(e^{\beta_{m}\Vlo_{1,m}^{k}(s_{1}^{k})}-e^{\beta_{m}\Vup_{1,m}^{k}(s_{1}^{k})}\right)\nonumber\\
     & =\frac{H}{e^{\beta_{m}H}-1}\left(e^{\beta_{m}\Vup_{1,m}^{k}(s_{1}^{k})}-e^{\beta_{m}\Vlo_{1,m}^{k}(s_{1}^{k})}\right)\nonumber\\
     & \le H\cdot U_{1}^{k}(s_{1}^{k}),\label{eq:regret_bound_last_neg_beta}
\end{align}
where step $(i)$ holds since $f(x)=\log x$ is $(e^{-\beta_{m}H})$-Lipschitz
for $x\in[e^{\beta_{m}H},1]$, step $(ii)$ holds on the event of
Lemma \ref{lem:exp_V_Q}, and the last step holds by \eqref{eq:U_ub}.
The proof for the regret upper bound is completed by applying   \eqref{eq:sum_U} to \eqref{eq:regret_bound_last_pos_beta} and \eqref{eq:regret_bound_last_neg_beta} combined.

Finally, a sample complexity guarantee can be derived based on the regret bound, following an argument similar to that presented in \citet{jin2018q}. In particular, we have
\begin{align*}
    \reg_{\nash}(K) & = \sum_{k\in[K]}\max_{m\in[M]}\frac{(V_{1,m}^{*,\pi_{-m}^{k}}-V_{1,m}^{\pi^{k}})(s_{1})}{\Phi_{H}(\beta_{m})} \lesssim\sqrt{H^{4}S^{2}AK\iota^{2}},
\end{align*}
and a random policy $\pi^\ddagger$ defined as a uniform sample from $\{\pi^k\}_{k=1}^K$ enjoys
\begin{align*}
    \max_{m\in[M]} \frac{(V_{1,m}^{*,\pi_{-m}^\ddagger}-V_{1,m}^{\pi^\ddagger})(s_{1})}{\Phi_{H}(\beta_{m})} \lesssim \sqrt{3H^{4}S^{2}A\iota^{2}/K}
\end{align*}
with probability at least $\frac{2}{3}$. It follows that \cref{alg:MRSA} finds $\eps$-optimal policy with $K = \widetilde O(H^4 S^2 A /\eps^2)$ episodes.

\subsection{Proofs for \CCE and \CE \label{sec:proof_reg_ce_cce}}

We supply the proofs of upper and lower  bounds of exponential value functions involved in \cref{alg:MRSA} for \CCE and \CE in this section, which complement the analogous results on \NE in \cref{lem:exp_V_Q}. 
The regret upper bounds for \CCE and \CE then follow from substituting \cref{lem:exp_V_Q} with \cref{lem:exp_V_Q_CCE,lem:exp_V_Q_CE}, respectively, and replacing the exponential value function $e^{\beta_m\Vdkhm(s)}$ under the optimal response with $e^{\beta_m\max_{\psi\in\Psi_m}\Vdiamondpikhm(s)}$ for \CE.

\begin{lemma}[Upper and Lower Bounds for \CCE] \label{lem:exp_V_Q_CCE}
    For any $\delta\in(0,1]$ and $(k,h,m,s,a)$, the following statements hold with probability at least $1-\delta$ for any  policy $\pi^k$. If $\beta_m > 0$, we have
    \begin{gather*}
        e^{\beta_m\Vukhm(s)} \geq e^{\beta_m\Vdkhm(s)}, \qquad
        e^{\beta_m\Vpikhm(s)} \geq e^{\beta_m\Vlkhm(s)}, \\
        e^{\beta_m\Qukhm(s,a)}
        \geq e^{\beta_m\Qdkhm(s,a)}, \qquad
        e^{\beta_m\Qpikhm(s,a)} \geq e^{\beta_m\Qlkhm(s,a)};
    \end{gather*}
    and if $\beta_m < 0$, we have
    \begin{gather*}
        e^{\beta_m\Vdkhm(s)} \geq e^{\beta_m\Vukhm(s)}, \qquad
        e^{\beta_m\Vlkhm(s)} \geq e^{\beta_m\Vpikhm(s)}, \\
        e^{\beta_m\Qdkhm(s,a)}
        \geq e^{\beta_m\Qukhm(s,a)}, \qquad
        e^{\beta_m\Qlkhm(s,a)} \geq e^{\beta_m\Qpikhm(s,a)}.
    \end{gather*}
\end{lemma}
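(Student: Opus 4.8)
The plan is to follow the proof of \cref{lem:exp_V_Q} almost verbatim, isolating the one place where the \NE structure was used and replacing it by the defining no-regret property of a \CCE returned by $\eqsol$. As in \cref{lem:exp_V_Q}, I fix a tuple $(k,h,m,s,a)$ and $\delta\in(0,1]$, work on the event of \cref{lem:uni_con}, and argue by backward induction on $h$; the base case $h=H+1$ is trivial since all the value functions vanish there.

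For the inductive step, the action-value inequalities are obtained exactly as before and do not involve the equilibrium type. Starting from the recursion \eqref{eq:Q_recur}, but now with $V_{h+1,m}^{*,\pi_{-m}^k}$ the best-response value against the $\calA_{-m}$-marginal of the joint policy $\pi^k$, the inductive hypothesis on the $V$'s at step $h+1$ together with the concentration bound \eqref{eq:bound_estim_error} yields, for $\beta_m>0$, $e^{\beta_m\Qup_{h,m}^k}(s,a)\ge e^{\beta_m Q_{h,m}^{*,\pi_{-m}^k}}(s,a)$ and $e^{\beta_m\Qlo_{h,m}^k}(s,a)\le e^{\beta_m Q_{h,m}^{\pi^k}}(s,a)$, with both inequalities reversed for $\beta_m<0$; the truncations in \eqref{eqn:MRSA_Q_upper}--\eqref{eqn:MRSA_Q_lower} only reinforce these, since $e^{\beta_m Q_{h,m}^{*,\pi_{-m}^k}}$ and $e^{\beta_m Q_{h,m}^{\pi^k}}$ already lie inside the truncation intervals. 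The only care needed is that for each of the four quantities the relevant value-function range makes the bonus $\gamma_{h,m}^k$ dominate the concentration error of \cref{lem:uni_con}, which is checked exactly as in the \NE proof.

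The step that genuinely differs is passing from the $Q$-level inequalities to the $V$-level ones. For the lower bounds this is immediate: both $e^{\beta_m\Vlo_{h,m}^k}(s)=[\calG_{\pi_h^k}e^{\beta_m\Qlo_{h,m}^k}](s)$ and $e^{\beta_m V_{h,m}^{\pi^k}}(s)=[\calG_{\pi_h^k}e^{\beta_m Q_{h,m}^{\pi^k}}](s)$ aggregate their respective action-value functions against the \emph{same} policy $\pi_h^k$, so monotonicity of $\calG_{\pi_h^k}$ gives $e^{\beta_m\Vlo_{h,m}^k}(s)\le e^{\beta_m V_{h,m}^{\pi^k}}(s)$ for $\beta_m>0$ and the reverse for $\beta_m<0$. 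For the upper bounds, I would invoke the \CCE property of $\pi_h^k$ with respect to the signed payoffs $\{(-1)^{\II(\beta_m<0)}e^{\beta_m\Qup_{h,m}^k}(s,\cdot)\}$ fed to $\eqsol$: for $\beta_m>0$ this reads $[\calG_{\pi_h^k}e^{\beta_m\Qup_{h,m}^k}](s)\ge\max_\nu[\calG_{\nu\times\pi_{h,-m}^k}e^{\beta_m\Qup_{h,m}^k}](s)$, while for $\beta_m<0$ the sign flip turns it into $[\calG_{\pi_h^k}e^{\beta_m\Qup_{h,m}^k}](s)\le\min_\nu[\calG_{\nu\times\pi_{h,-m}^k}e^{\beta_m\Qup_{h,m}^k}](s)$. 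Combining this with the $Q$-level inequality just established (which, in the favorable direction, lets me replace $e^{\beta_m\Qup_{h,m}^k}$ by $e^{\beta_m Q_{h,m}^{*,\pi_{-m}^k}}$ inside the $\max$/$\min$) and with the dynamic-programming characterization of the best response, namely $e^{\beta_m V_{h,m}^{*,\pi_{-m}^k}}(s)=\max_\nu[\calG_{\nu\times\pi_{h,-m}^k}e^{\beta_m Q_{h,m}^{*,\pi_{-m}^k}}](s)$ for $\beta_m>0$ and $e^{\beta_m V_{h,m}^{*,\pi_{-m}^k}}(s)=\min_\nu[\calG_{\nu\times\pi_{h,-m}^k}e^{\beta_m Q_{h,m}^{*,\pi_{-m}^k}}](s)$ for $\beta_m<0$, closes the induction.

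The main obstacle I anticipate is bookkeeping the sign of $\beta_m$ consistently across three places at once: (i) the direction of every inequality flips; (ii) ``maximizing utility'' becomes ``minimizing the exponential value'', so the best-response characterization switches between $\max_\nu$ and $\min_\nu$; and (iii) $\eqsol$ is fed the signed payoff $(-1)^{\II(\beta_m<0)}e^{\beta_m\Qup_{h,m}^k}$, so its no-regret guarantee must first be unwound into a statement about $e^{\beta_m\Qup_{h,m}^k}$ itself before it can be chained with the other two. Once the $\beta_m<0$ branch is set up correctly it is a mirror image of the $\beta_m>0$ branch. The extension to \CE in \cref{lem:exp_V_Q_CE} follows the same outline, with the no-regret property of $\pi_h^k$ now taken against all strategy modifications $\psi\in\Psi_m$ rather than against constant deviations $\nu$, and $e^{\beta_m V_{h,m}^{*,\pi_{-m}^k}}$ correspondingly replaced by $e^{\beta_m\max_{\psi\in\Psi_m}V_{h,m}^{\psi\diamond\pi^k}}$.
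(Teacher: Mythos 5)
Your proposal is correct and follows essentially the same route as the paper's proof: backward induction in which the step-$(h+1)$ value inequalities plus the concentration bound of \cref{lem:uni_con} (absorbed by the bonus $\gamma^k_{h,m}$) give the $Q$-level inequalities, and the one-step no-regret property of the \CCE returned by $\eqsol$ (with the sign convention for $\beta_m<0$) combined with the best-response characterization closes the induction at the $V$ level. Your explicit remarks that the truncation only reinforces the bounds and that the $\Vlo$-versus-$V^{\pi^k}$ direction needs only monotonicity under the common policy $\pi^k_h$ are points the paper leaves implicit, but they do not change the argument.
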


\begin{proof}
    We prove this result through induction. We only show the argument for the case of $\beta_m > 0$, and we can multiply $-1$ to the argument below for $\beta_m < 0$. 
    For any $\delta\in(0,1]$, fixed tuple $(k,h,m,s,a)$, and $\beta_m > 0$, if we assume that the value functions at the next step satisfies $e^{\beta_m\Vukhhm(s)} - e^{\beta_m\Vdkhhm(s)} \geq 0$, then we have
    \begin{align*}
        & e^{\beta_m\Qukhm(s)} - e^{\beta_m\Qdkhm(s)} \\
        =\ &  (e^{\beta_m\rhm(s,a)}[\Phatkh e^{\beta_m\Vukhhm}](s,a) - e^{\beta_m\rhm(s,a)}[\Ph e^{\beta_m\Vdkhhm}](s,a)) + \gammakhm(s,a) \\
        =\ &  e^{\beta_m\rhm(s,a)}[\Phatkh (e^{\beta_m\Vukhhm} - e^{\beta_m\Vdkhhm})](s,a) \\
        \ & +  [(\Phatkh-\Ph) e^{\beta_m(\rhm(s,a) + \Vdkhhm)}](s,a) + \gammakhm(s,a) \\
        \geq\ & 0,
    \end{align*}
    where the first component is non-negative due to induction assumption, the sum of the second and the third component is also non-negative due to \cref{lem:uni_con}.
    Notice that $\VukHHm(s) = \VdkHHm(s) = 0$ for all state $s\in\calS$, and $e^{\beta_m\VukHHm(s)} - e^{\beta_m\VdkHHm(s)} \geq 0$ always holds for step $H+1$. Consequently, we have
    \begin{align*}
        e^{\beta_m\Qukhm(s,a)}
        \geq e^{\beta_m\Qdkhm(s,a)}
    \end{align*}
    for all agents, state-action pairs, and $h\in[H]$.
    
    Further, given the assumption $e^{\beta_m\Qukhm(s,a)} - e^{\beta_m\Qdkhm(s,a)} \geq 0$ for any fixed tuple $(k,h,m,s,a)$, it follows that
    \begin{align*}
        e^{\beta_m\Vukhm(s)} - e^{\beta_m\Vdkhm(s)} & = [\DDpikh e^{\beta_m\Qukhm}](s) - e^{\beta_m\Vdkhm(s)} \geq 0,
    \end{align*}
    where by the definition of \CCE equilibrium that 
    \begin{align*}
        [\DDpikh e^{\beta_m\Qukhm}](s) = \max_\nu[\DDnukh e^{\beta_m\Qukhm}](s) \geq e^{\beta_m\Vdkhm(s)}
    \end{align*}
    for any $\beta_m>0$ and 
    \begin{align*}
        [\DDpikh e^{\beta_m\Qukhm}](s) = \min_\nu[\DDnukh e^{\beta_m\Qukhm}](s) \leq e^{\beta_m\Vdkhm(s)}
    \end{align*}
    for any $\beta_m<0$.
    
    Recursion proofs for $(-1)^{\II\{\beta_m<0\}}\cdot(e^{\beta_m\Vpikhm(s)} - e^{\beta_m\Vlkhm(s)}) \geq 0$ and $(-1)^{\II\{\beta_m<0\}}\cdot(e^{\beta_m\Qpikhm(s,a)} - e^{\beta_m\Qlkhm(s,a)}) \geq 0$ follow the same reasoning.
\end{proof}

\begin{lemma}[Upper and Lower Bounds for \CE] \label{lem:exp_V_Q_CE}
    For any $\delta\in(0,1]$ and $(k,h,m,s,a)$, the following statements hold with probability at least $1-\delta$ for any \CE policy $\pi^k$. If $\beta_m > 0$, we have
    \begin{gather*}
        e^{\beta_m\Vukhm(s)} \geq e^{\beta_m\max_{\psi\in\Psi_m}\Vdiamondpikhm(s)}, \qquad
        e^{\beta_m\Vpikhm(s)} \geq e^{\beta_m\Vlkhm(s)}, \\
        e^{\beta_m\Qukhm(s,a)} \geq e^{\beta_m\max_{\psi\in\Psi_m}\Qdiamondpikhm(s,a)}, \qquad
        e^{\beta_m\Qpikhm(s,a)} \geq e^{\beta_m\Qlkhm(s,a)};
    \end{gather*}
    if $\beta_m < 0$, we have
    \begin{gather*}
        e^{\beta_m\max_{\psi\in\Psi_m}\Vdiamondpikhm(s)} \geq e^{\beta_m\Vukhm(s)}, \qquad
        e^{\beta_m\Vlkhm(s)} \geq e^{\beta_m\Vpikhm(s)}, \\
        e^{\beta_m\max_{\psi\in\Psi_m}\Qdiamondpikhm(s,a)} \geq e^{\beta_m\Qukhm(s,a)}, \qquad
        e^{\beta_m\Qlkhm(s,a)} \geq e^{\beta_m\Qpikhm(s,a)}.
    \end{gather*}
\end{lemma}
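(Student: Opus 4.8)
\emph{Proof plan.} The plan is to run the same downward induction on $h$ as in \cref{lem:exp_V_Q,lem:exp_V_Q_CCE}, the only new ingredient being the one-step correlated-equilibrium property of the policy produced by $\eqsol$. I treat $\beta_m>0$ in detail; for $\beta_m<0$ every inequality reverses, which is handled automatically by the sign prefactor $(-1)^{\II(\beta_m<0)}$ on the payoffs fed to $\eqsol$ and by reading $\max_{\psi\in\Psi_m}$ on the value scale as a $\min$ on the $e^{\beta_m\cdot}$ scale. The base case $h=H+1$ is immediate because $\Vukhm$, $\Vlkhm$, $V_{H+1,m}^{\psi\diamond\pi^k}$, and $V_{H+1,m}^{\pi^k}$ all vanish there.

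\emph{$Q$-level step.} Fix $\psi\in\Psi_m$ and assume the $V$-level inequalities at step $h+1$. Using the exponential Bellman recursion $e^{\beta_m Q_{h,m}^{\psi\diamond\pi^k}(s,a)}=e^{\beta_m\rhm(s,a)}[\Ph e^{\beta_m V_{h+1,m}^{\psi\diamond\pi^k}}](s,a)$ together with the update for $\qup_{h,m}$ in \cref{alg:Q-update}, I would split $e^{\beta_m\Qukhm(s,a)}-e^{\beta_m Q_{h,m}^{\psi\diamond\pi^k}(s,a)}$ into $e^{\beta_m\rhm(s,a)}[\Phatkh(e^{\beta_m\Vukhhm}-e^{\beta_m V_{h+1,m}^{\psi\diamond\pi^k}})](s,a)$, which is nonnegative by the induction hypothesis $e^{\beta_m\Vukhhm}\ge e^{\beta_m\max_{\psi'\in\Psi_m}V_{h+1,m}^{\psi'\diamond\pi^k}}\ge e^{\beta_m V_{h+1,m}^{\psi\diamond\pi^k}}$, plus $\gammakhm(s,a)+[(\Phatkh-\Ph)e^{\beta_m(\rhm(s,a)+V_{h+1,m}^{\psi\diamond\pi^k})}](s,a)$, which is nonnegative because \cref{lem:uni_con} (applied with $G=H-h+1$; its function class covers $V_{h+1,m}^{\psi\diamond\pi^k}$ for all $\psi$ simultaneously, so no extra union bound is needed) bounds the estimation error by $\gammakhm(s,a)$. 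When the truncation in \eqref{eqn:MRSA_Q_upper} is active I instead use the crude a priori bound $0\le Q_{h,m}^{\psi\diamond\pi^k}(s,a)\le H-h+1$. Taking the maximum over $\psi$ gives $e^{\beta_m\Qukhm(s,a)}\ge e^{\beta_m\max_{\psi\in\Psi_m}\Qdiamondpikhm(s,a)}$. The lower-confidence claim $e^{\beta_m\Qpikhm(s,a)}\ge e^{\beta_m\Qlkhm(s,a)}$ is proved by the same computation comparing $\Qlkhm$ against $Q_{h,m}^{\pi^k}$, exactly as in \cref{lem:exp_V_Q_CCE}, and needs no equilibrium property.

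\emph{$V$-level step (the crux).} Here I would expand $e^{\beta_m\Vukhm(s)}=\sum_a\pi_h^k(a\mid s)e^{\beta_m\Qukhm(s,a)}$ and choose $\psi^*\in\Psi_m$ attaining $\max_{\psi\in\Psi_m}\Vdiamondpikhm(s)$. The key observation is that its component $\psi^*_{h,s,m}:\calA_m\to\calA_m$ is precisely a one-step strategy modification of the matrix game that $\eqsol$ solved at $(h,s)$ with payoffs $\{(-1)^{\II(\beta_m<0)}e^{\beta_m\Qukhm(s,\cdot)}\}$; since $\pi_h^k(\cdot\mid s)$ is a correlated equilibrium of that game, for $\beta_m>0$ one has $\sum_a\pi_h^k(a\mid s)e^{\beta_m\Qukhm(s,a)}\ge\sum_a\pi_h^k(a\mid s)e^{\beta_m\Qukhm(s,(\psi^*_{h,s,m}(a_m),a_{-m}))}$ (and the reverse for $\beta_m<0$, coming from the sign prefactor on the payoffs). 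Bounding each summand on the right below by $e^{\beta_m Q_{h,m}^{\psi^*\diamond\pi^k}(s,(\psi^*_{h,s,m}(a_m),a_{-m}))}$ via the $Q$-level step and recognizing the resulting sum as $e^{\beta_m V_{h,m}^{\psi^*\diamond\pi^k}(s)}=e^{\beta_m\max_{\psi\in\Psi_m}\Vdiamondpikhm(s)}$ closes the upper-confidence side. For the lower-confidence side, $e^{\beta_m\Vpikhm(s)}=\sum_a\pi_h^k(a\mid s)e^{\beta_m\Qpikhm(s,a)}\ge\sum_a\pi_h^k(a\mid s)e^{\beta_m\Qlkhm(s,a)}=e^{\beta_m\Vlkhm(s)}$ termwise, with no appeal to the CE property since there we compare $\pi^k$ with itself. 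This closes the induction.

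I expect the main obstacle to be conceptual bookkeeping rather than any analytic difficulty. One must keep straight that $\psi$ modifies the Markov game across \emph{all} remaining steps whereas the CE guarantee is only a one-step statement about each $\pi_h^k(\cdot\mid s)$; the two are reconciled exactly by restricting $\psi^*$ to its $(h,s)$-component inside the $V$-level step. One must also check that every inequality flips coherently for $\beta_m<0$, which the signed payoffs and the $\max\leftrightarrow\min$ correspondence make automatic, and that the $\qup$/$\qlo$ truncations are absorbed by the uniform bounds $0\le Q_{h,m}^{\psi\diamond\pi^k},Q_{h,m}^{\pi^k}\le H-h+1$. Once this lemma is in hand, it substitutes for \cref{lem:exp_V_Q} in \cref{sec:proof_reg_upper} verbatim, with $e^{\beta_m V_{1,m}^{*,\pi_{-m}^k}(s_1^k)}$ replaced by $e^{\beta_m\max_{\psi\in\Psi_m}V_{1,m}^{\psi\diamond\pi^k}(s_1^k)}$, so the $U_1^k$ machinery and the resulting $\widetilde O(\sqrt{KH^4S^2A})$ regret bound carry over unchanged.
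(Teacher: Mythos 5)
Your proposal is correct and follows essentially the same route as the paper's own proof: a backward induction in which the $Q$-level comparison is obtained from the exponential Bellman decomposition plus the uniform concentration bound (Lemma~\ref{lem:uni_con}, whose covering argument indeed covers all modified value functions $V^{\psi\diamond\pi^k}_{h+1,m}$ at once) absorbed by the bonus $\gamma^k_{h,m}$, and the $V$-level comparison is obtained from the one-step \CE{} property of the policy returned by the solver applied to the payoffs $e^{\beta_m\Qup_{h,m}(s,\cdot)}$, with all inequalities reversing for $\beta_m<0$. Your spelled-out use of an explicit maximizing modification $\psi^*$ and its $(h,s)$-component, and your explicit treatment of the truncation, are only presentational refinements of the paper's argument, not a different method.
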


\begin{proof}
    We prove this result with induction. We only show the argument for the case of $\beta_m>0$, and for $\beta_m<0$ we can multiply $-1$ to the argument below. 
    For any $\delta\in(0,1]$, fixed tuple $(k,h,m,s,a)$, and $\beta_m > 0$, if $e^{\beta_m\Vukhhm(s)} - e^{\beta_m\max_{\psi\in\Psi_m}\Vdiamondpikhhm(s)} \geq 0$, then we have
    \begin{align*}
        & e^{\beta_m\Qukhm(s,a)} - e^{\beta_m\max_{\psi\in\Psi_m}\Qdiamondpikhm(s,a)} \\
        =\ &  (e^{\beta_m\rhm(s,a)}[\Phatkh e^{\beta_m\Vukhhm}](s,a) - e^{\beta_m\rhm(s,a)}[\Ph e^{\beta_m\max_{\psi\in\Psi_m}\Vdiamondpikhhm}](s,a)) + \gammakhm(s,a) \\
        =\ &  e^{\beta_m\rhm(s,a)}[\Phatkh (e^{\beta_m\Vukhhm} - e^{\beta_m\max_{\psi\in\Psi_m}\Vdiamondpikhhm})](s,a) \\
        \ & +  [(\Phatkh-\Ph) e^{\beta_m(\rhm(s,a) + \max_{\psi\in\Psi_m}\Vdiamondpikhhm)}](s,a) + \gammakhm(s,a) \\
        \geq\ & 0,
    \end{align*}
    where the first component is non-negative due to induction assumption, the sum of the second and the third component is also non-negative due to \cref{lem:uni_con}.
    Notice that $\VukHHm(s) = \VdiamondpikHHm(s) = 0$ for all state $s\in\calS$, and $e^{\beta_m\VukHHm(s)} - e^{\beta_m\max_{\psi\in\Psi_m}\VdiamondpikHHm(s)} \geq 0$ holds for step $H+1$.
    Consequently, we have
    \begin{align*}
        e^{\beta_m\Qukhm(s,a)} \geq e^{\beta_m\max_{\psi\in\Psi_m}\Qdiamondpikhm(s,a)}
    \end{align*}
    for all agents, state-action pairs, and $h\in[H]$.
    
    Further, given the assumption $e^{\beta_m\Qukhm(s,a)} - e^{\beta_m\max_{\psi\in\Psi_m}\Qdiamondpikhm(s,a)} \geq 0$ for any fixed tuple $(k,h,m,s,a)$, we have
    \begin{align*}
        e^{\beta_m\Vukhm(s)} - e^{\beta_m\max_{\psi\in\Psi_m}\Vdiamondpikhhm(s)} & = [\DDpikh e^{\beta_m\Qukhm}](s) - e^{\beta_m\max_{\psi\in\Psi_m}\Vdiamondpikhhm(s)} \\
        & \geq 0,
    \end{align*}
    where by the definition of \CE that 
    \begin{align*}
        [\DDpikh e^{\beta_m\Qukhm}](s) = \max_\psi[\calG_{\psi(\pi_h^k)} e^{\beta_m\Qukhm}](s) \geq e^{\beta_m\max_{\psi\in\Psi_m}\Vdiamondpikhhm(s)}
    \end{align*}
    for any $\beta_m>0$ and 
    \begin{align*}
        [\DDpikh e^{\beta_m\Qukhm}](s) = \min_\psi[\calG_{\psi(\pi_h^k)} e^{\beta_m\Qukhm}](s) \leq e^{\beta_m\max_{\psi\in\Psi_m}\Vdiamondpikhhm(s)}
    \end{align*}
    for any $\beta_m<0$. Recall that $e^{\beta_m\max_{\psi\in\Psi_m}\Vdiamondpikhhm(s)} = \max_\psi[\calG_{\psi(\pi_h^k)} e^{\beta_m\max_{\psi'} Q_{h,m}^{\psi'(\pi_h^k)}}](s)$ for $\beta_m>0$ and $e^{\beta_m\max_{\psi\in\Psi_m}\Vdiamondpikhhm(s)} = \min_\psi[\calG_{\psi(\pi_h^k)} e^{\beta_m\max_{\psi'} Q_{h,m}^{\psi'(\pi_h^k)}}](s)$ for $\beta_m<0$.
    
    The recursion arguments for the other two inequalities follow the same reasoning.
\end{proof}

\paragraph*{Future directions and broad impact.} 

With the recent developments in variance-aware learning, an exciting avenue for future exploration is examining how noise-adaptive algorithms \citep{xu2023noise} can enhance risk-sensitive \RL. There is also significant potential in developing efficient algorithms for risk-sensitive RL in varied contexts, such as matching and finding competitive equilibrium in macroeconomics \citep{min2022learn,xu2023finding}. Furthermore, given the close connection between risk-sensitive RL and human learning behaviors, it would be fascinating to investigate its integration with fields like meta-learning and bio-inspired learning, as discussed in studies by \citet{xu2021meta} and \citet{song2021convergence}. Additionally, a critical area for future research lies in exploring the role of risk sensitivity in augmenting unsupervised learning algorithms, a concept touched upon by \citet{ling2019landscape}.

\end{document}